\theoremstyle{plain}
\newtheorem{theorem}{Theorem}
\newtheorem{lemma}{Lemma}
\theoremstyle{definition}
\newtheorem{assumption}{Assumption}
\theoremstyle{remark}
\theoremstyle{observation}
\newtheorem{observation}{Observation}
\newcommand{\oms}{\{\!\!\{}
\newcommand{\cms}{\}\!\!\}}
\newcommand{\eg}{e.\,g., }
\newcommand{\ie}{i.\,e., }
\title[What Do GNNs Actually Learn? Towards Understanding their Representations]{What Do GNNs Actually Learn? Towards Understanding their Representations}
\author[G. Nikolentzos et al.]{%
Giannis Nikolentzos\\
University of Peloponnese, Greece\\
\email{nikolentzos@uop.gr}\And
Michail Chatzianastasis\\
LIX, \'Ecole Polytecnique, IP Paris, France\\
\email{mixalisx97@gmail.com}\And
Michalis Vazirgiannis\\
LIX, \'Ecole Polytecnique, IP Paris, France\\
\email{mvazirg@lix.polytechnique.fr}
}
\begin{document}

\maketitle

\begin{abstract}
In recent years, graph neural networks (GNNs) have achieved great success in the field of graph representation learning.
Although prior work has shed light on the expressiveness of those models (\ie whether they can distinguish pairs of non-isomorphic graphs), it is still not clear what structural information is encoded into the node representations that are learned by those models.
In this paper, we address this gap by studying the node representations learned by four standard GNN models.
We find that some models produce identical representations for all nodes, while the representations learned by other models are linked to some notion of walks of specific length that start from the nodes.
We establish Lipschitz bounds for these models with respect to the number of (normalized) walks.
Additionally, we investigate the influence of node features on the learned representations.
We find that if the initial representations of all nodes point in the same direction, the representations learned at the $k$-th layer of the models are also related to the initial features of nodes that can be reached in exactly $k$ steps.
We also apply our findings to understand the phenomenon of oversquashing that occurs in GNNs.
Our theoretical analysis is validated through experiments on synthetic and real-world datasets.
\end{abstract}

\section{Introduction}
Graphs arise naturally in a wide variety of domains such as in bio- and chemo-informatics~\cite{stokes2020deep}, in social network analysis~\cite{easley2010networks} and in information sciences~\cite{hogan2021knowledge}.
There is thus a need for machine learning algorithms that can operate on graph-structured data, \ie algorithms that can exploit both the information encoded in the graph structure but also the information contained in the node and edge features.
Recently, graph neural networks (GNNs) emerged as a very promising method for learning on graphs, and have driven the rapid progress in the field of graph representation learning~\cite{wu2020comprehensive}.
 
Even though different types of GNNs were proposed in the past years, message passing models undoubtedly seem like a natural approach to the problem. 
These models, known as message passing neural networks (MPNNs)~\cite{gilmer2017neural}, employ a message passing (or neighborhood aggregation) procedure where each node aggregates the representations of its neighbors along with its own representation to produce new updated representations.
For graph-related tasks, MPNNs usually apply some permutation invariant readout function to the node representations to produce a representation for the entire graph.
The family of MPNNs has been studied a lot in the past few years, and there are now available dozens of instances of this family of models.
A lot of work has focused on investigating the expressive power of those models.
It was recently shown that standard MPNNs are at most as powerful as the Weisfeiler-Leman algorithm in terms of distinguishing non-isomorphic graphs~\cite{xu2019how,morris2019weisfeiler}.

The recent success of GNNs put graph kernels, another approach for graph–based machine learning, into the shade.
Unlike GNNs, graph kernels generate representations (implicit or explicit) that typically capture some substructure of graphs~\cite{nikolentzos2021graph}.
Such substructures include random walks~\cite{kashima2003marginalized,gartner2003graph}, shortest paths~\cite{borgwardt2005shortest} and subgraphs~\cite{shervashidze2009efficient,kriege2012subgraph}.
Therefore, the properties and the graph representations produced by most graph kernels are fully-understood.
This is not however the case for MPNNs since, despite the great activity in the field, still little is known about the properties of graphs that are captured in the representations learned by those models.

In this paper, we fill this gap by studying the node representations learned by MPNNs.
We first investigate what structural properties of graphs are captured in the learned representations of standard models.
To study those representations, we capitalize on Lipschitz continuity, a standard tool for analyzing representations of neural network models and for assessing their robustness to perturbations~\cite{gouk2021regularisation,huang2021training}.
We show that when all nodes are annotated with the same features, both GAT~\cite{velivckovic2018graph} and DGCNN~\cite{zhang2018end} embed all nodes into the same vector.
Furthermore, we show that the representations that emerge at the $k$-th layer of GCN~\cite{kipf2017semi} and GIN~\cite{xu2019how} are related to some notion of walks of length $k$ over the input graph.
This suggests that MPNNs suffer from the following limitation: structurally dissimilar nodes can have similar (or even identical) representations at some layer $k$ where $k > 1$.
We also study the impact of node features on the learned representations.
We show that if the initial features of all nodes point in the same direction, the node representations at the $k$-th layer of GCN and GIN are all related to the initial features of the nodes that can be reached in exactly $k$ steps from the node.
We finally study the problem of oversquashing~\cite{alon2021bottleneck} from the lens of our theoretical findings.
We verify our theoretical analysis in experiments conducted on synthetic and real-world datasets.
Our code is available at \url{https://github.com/giannisnik/gnn-representations}.


\section{Related Work}
While GNNs have been around for decades~\citep{sperduti1997supervised,scarselli2009graph,micheli2009neural}, it is only in recent years that the scientific community became aware of their power and potential. 
The increased scientific activity in the field led to the development of a large number of models~\citep{bruna2014,li2015gated,duvenaud2015convolutional,atwood2016diffusion,defferrard2016convolutional}.
Those models were categorized into spectral and spatial approaches depending on which domain the convolutions (neighborhood aggregations) were performed.
Later, it was shown that all these models can be seen as instances of a single common framework~\citep{gilmer2017neural}.
These models, known as message passing neural networks (MPNNs), use a message passing scheme where nodes iteratively aggregate feature information from their neighbors.
Then, to compute a representation for the entire graph, MPNNs typically employ some permutation invariant readout function which aggregates the representations of all the nodes of the graph.
In the past few years, there have been proposed several extensions and improvements to the MPNN framework.
Most studies have focused on the message passing procedure and have proposed more expressive or permutation sensitive aggregation functions~\citep{murphy2019relational,seo2019discriminative,buterez2022graph,chatzianastasis2023graph}, schemes that incorporate different local structures or high-order neighborhoods~\citep{jin2020gralsp,abu2019mixhop}, non-Euclidean geometry approaches~\citep{chami2019hyperbolic}, while others have focused on efficiency~\citep{gallicchio2020fast}. 
Fewer works have focused on the pooling phase and have proposed more advanced strategies for learning hierarchical graph representations~\citep{ying2018hierarchical,gao2019graph}.
Note also that not all GNNs belong to the family of MPNNs~\citep{niepert2016learning,nikolentzos2020random,nikolentzos2023permute}, and that there exist models which process random walks sampled from the graph~\cite{tonshoff2023walking,wang2024non} and which can mitigate MPNNs' common symptoms such as oversmoothing and oversquashing.

A considerable amount of recent work has focused on characterizing the expressive power of GNNs.
Most of these studies compare GNNs against the WL algorithm and its variants~\citep{kiefer2020weisfeiler} to investigate what classes of non-isomorphic graphs they can distinguish.
For instance, it has been shown that standard GNNs are not more powerful than the $1$-WL algorithm~\citep{xu2019how,morris2019weisfeiler}.
Other studies capitalized on high-order variants of the WL algorithm to derive new models that are more powerful than standard MPNNs~\citep{morris2019weisfeiler,morris2020weisfeiler}.
Recent research has investigated the expressive power of $k$-order GNNs in terms of their ability to distinguish non-isomorphic graphs.
In particular, it has been shown that $k$-order GNNs are at least as powerful as the $k$-WL test in this regard~\citep{maron2019invariant}.
Various approaches have also been proposed to enhance the expressive power of GNNs beyond that of the WL test.
These include encoding vertex identifiers~\citep{vignac2020building}, incorporating all possible node permutations~\citep{murphy2019relational,dasoulas2020coloring}, using random features~\citep{sato2021random,abboud2020surprising}, utilizing node features~\citep{you2021identity}, incorporating spectral information~\citep{balcilar2021breaking}, utilizing simplicial and cellular complexes~\citep{bodnar2021weisfeiler,bodnar2021weisfeiler2} and directional information~\citep{beaini2021directional}.
It has also been shown that extracting and processing subgraphs can further enhance the expressive power of GNNs~\citep{nikolentzos2020k,zhang2021nested,bevilacqua2021equivariant}.
For instance, it has been suggested that expressive power of GNNs can be increased by aggregating the representations of subgraphs produced by standard GNNs, which arise from removing one or more vertices from a given graph~\citep{cotta2021reconstruction,papp2021dropgnn}.
The above studies mainly focus on whether GNNs can distinguish pairs of non-isomorphic graph.
However, it still remains unclear what kind of structural information is encoded into the node representations learned by GNNs.
Some recent works have proposed models that aim to learn representations that preserve some notion of distance of nodes~\citep{nikolentzos2023weisfeiler}, however, they do not shed light into the representations generated by standard models.
The work closest to ours is the one proposed by~\citet{chuang2022tree}, where the authors propose the Tree Mover's Distance, a pseudometric for node-attributed graphs, and study its relation to the generalization of GNNs.
Our work is also related to the work of~\citet{xu2018representation} where the authors use the concept of walks to define the effective range of nodes that any given node's representation draws from.
However, while this work studies the range of nodes whose features affect a given node's representation, we focus on the exact node representations that are learned by the model.
Finally,~\citet{yehudai2021local} capitalize on local computation trees and graph patterns similar to the ones studied in this paper to investigate the GNNs' ability to generalize to larger graphs.

\section{Preliminaries}

\subsection{Notation}
Let $\mathbb{N}$ denote the set of natural numbers, \ie $\{1,2,\ldots\}$. 
Then, $[n] = \{1,\ldots,n\} \subset \mathbb{N}$ for $n \geq 1$.
Let also $\oms \cms$ denote a multiset, \ie a generalized concept of a set that allows multiple instances for its elements.
Let $G = (V,E)$ be a (directed) graph, where $V$ is the vertex set and $E$ is the edge set.
We will denote by $n$ the number of vertices and by $m$ the number of edges, \ie $n = |V|$ and $m = |E|$.
The adjacency matrix $\mathbf{A} \in \mathbb{R}^{n \times n}$ encodes the edge information in a graph.
The element of the $i^{\text{th}}$ row and $j^{\text{th}}$ column is equal to $1$ if there is an edge between $v_i$ and $v_j$, and $0$ otherwise.
Let $\mathcal{N}(v)$ denote the the neighbourhood of vertex $v$, \ie the set $\{u \mid (u,v) \in E\}$.
The degree of a vertex $v$ is $d(v) = |\mathcal{N}(v)|$.
A walk in graph $G$ is a sequence of vertices $v_1,v_2,\ldots,v_{k+1}$ where $v_i \in V$ and $(v_i,v_{i+1}) \in E$ for $1 \leq i \leq k$.
We denote by $w_v^{(k)}$ the number of walks of length $k$ starting from node $v$.
Finally, let $\tilde{w}_v^{(k)}$ denote the sum of normalized walks of length $k$ where each walk $(v_1, v_2, \ldots, v_k)$ is normalized as follows $\nicefrac{1}{\big( (1+d(v_2)) \ldots (1+d(v_{k-1})) \sqrt{(1+d(v_1)) (1+d(v_k))} \big)}$.

\begin{table}[t]
    \centering
    \caption{Neighborhood aggregation schemes of the four considered models.}
    \tiny
    \begin{tabular}{|c|c|}
    \hline
    Model & Update Equation \\
    \hline
    \textbf{GCN} & $\mathbf{h}_v^{(k)} = \text{ReLU} \Bigg( \sum_{u \in \mathcal{N}(v) \cup \{v\}} \frac{\mathbf{W}^{(k)} \mathbf{h}_u^{(k-1)}}{\sqrt{(1+d(v)) (1+d(u))}} \Bigg)$ \\ \hline 
    \textbf{DGCNN} & $\mathbf{h}_v^{(k)} = f \Bigg( \sum_{u \in \mathcal{N}(v) \cup \{ v\}} \frac{1}{d(v)+1} \mathbf{W}^{(k)} \mathbf{h}_u^{(k-1)} \Bigg)$ \\
    \hline
    \textbf{GAT} & $\mathbf{h}_v^{(k)} = \sigma \Bigg( \sum_{u \in \mathcal{N}(v)} \alpha_{vu} \mathbf{W}^{(k)} \mathbf{h}_u^{(k-1)} \Bigg)$ \\ \hline
    \textbf{GIN-$\epsilon$} & $\mathbf{h}_v^{(k)} = \text{MLP}^{(k)} \Bigg( \Big( 1 + \epsilon^{(k)} \Big) \mathbf{h}_v^{(k-1)} + \sum_{u \in \mathcal{N}(v)} \mathbf{h}_u^{(k-1)} \Bigg)$ \\
    \hline
    \end{tabular}
    \label{tab:gnns}
\end{table}

\subsection{Message Passing Neural Networks}
As already discussed, most GNNs can be unified under the MPNN framework~\cite{gilmer2017neural}.
These models follow a neighborhood aggregation scheme, where each node representation is updated based on the aggregation of its neighbors' representations.
Let $\mathbf{h}_v^{(0)}$ denote node $v$'s initial feature vector.
Then, for a number $K$ of iterations, MPNNs update node representations as follows:
\begin{equation*}
    \begin{split}
        \mathbf{m}_v^{(k)} &= \text{AGGREGATE}^{(k)}  \Big( \oms \mathbf{h}_u^{(k-1)} \mid u \in \mathcal{N}(v) \cms \Big) \\
        \mathbf{h}_v^{(k)} &= \text{COMBINE}^{(k)}  \Big( \mathbf{h}_v^{(k-1)}, \mathbf{m}_v^{(k)}  \Big)
    \end{split}
    \label{eq:gnn_general}
\end{equation*}
where $\text{AGGREGATE}^{(k)}$ is a permutation invariant function.
By defining different $\text{AGGREGATE}^{(k)}$ and $\text{COMBINE}^{(k)}$ functions, we obtain different MPNN instances.
In this study, we consider the neighborhood aggregation schemes of four models, namely ($1$) Graph Convolution Network (GCN)~\cite{kipf2017semi}; ($2$)  Deep Graph Convolutional Neural Network (DGCNN)~\cite{zhang2018end}; ($3$) Graph Attention Network (GAT)~\cite{velivckovic2018graph}; and ($4$) Graph Isomorphism Network (GIN)~\cite{xu2019how}.
The aggregation schemes of the four models are illustrated in Table~\ref{tab:gnns}.
Note that for directed graphs, $\mathcal{N}(v)$ is a set that contains the in-neighbors of $v$.

For node-level tasks, the final node representations $\mathbf{h}_v^{(K)}$ can be directly passed to a fully-connected layer for prediction.
For graph-level tasks, a graph representation is obtained by aggregating the final representations of its nodes: $\mathbf{h}_G = \text{READOUT} \Big( \oms \mathbf{h}_v^{(K)} \mid v \in G \cms \Big)$.
The $\text{READOUT}$ function is typically a differentiable permutation invariant function such as the sum or mean operator.

\section{What Do MPNNs Actually Learn?}
We next investigate what structural properties of nodes these four considered models can capture.
Nodes are usually annotated with features that reveal information about their neighborhoods.
Such features include their degree or even more sophisticated features such as counts of certain substructures~\cite{bouritsas2022improving} or those extracted from Laplacian eigenvectors~\cite{dwivedi2020benchmarking}.
We are interested in identifying properties that are captured purely by these models.
Thus, we assume that no such features are available, and we annotate all nodes with the same feature vector or scalar.

\begin{theorem}
    Let $\mathcal{G}=\{G_1, \ldots,G_N\}$ be a collection of graphs.
    Let also $\mathcal{V}=V_1 \cup \ldots \cup V_N$ denote the set that contains the nodes of all graphs.
    All nodes are initially annotated with the same representation.
    Without loss of generality, we assume that they are annotated with a single feature equal to $1$, \ie $\mathbf{h}_v^{(0)} = 1$ $\forall v \in \mathcal{V}$.
    Then, after $k$ neighborhood aggregation layers:
    \begin{enumerate}[leftmargin=0.6cm]
        \item DGCNN and GAT both map all nodes to the same representation, \ie $\mathbf{h}_v^{(k)} = \mathbf{h}_u^{(k)}$ $\forall v,u \in \mathcal{V}$.
        \item GCN maps nodes to representations related to the sum of normalized walks of length $k$ starting from them:
        \begin{equation*}
            \Big| \Big| \mathbf{h}_v^{(k)} - \mathbf{h}_u^{(k)} \Big| \Big|_2 \leq \prod_{i=1}^k L_f^{(i)} \Big|\Big| \tilde{w}_v^{(k)} -  \tilde{w}_u^{(k)} \Big|\Big|_2 
        \end{equation*}
        where $L_f^{(i)}$ denotes the Lipschitz constant of the fully-connected layer of the $i$-th neighborhood aggregation layer.
        \item Under mild assumptions (biases of MLPs are ignored), GIN-$0$ maps nodes to representations that capture the number of walks of length $k$ starting from them:
        \begin{equation*}
            \Big| \Big| \mathbf{h}_v^{(k)} - \mathbf{h}_u^{(k)} \Big| \Big|_2 \leq \prod_{i=1}^k L_f^{(i)} \Big|\Big| w_v^{(k)} -  w_u^{(k)} \Big|\Big|_2 
        \end{equation*}
        where $L_f^{(i)}$ denotes the Lipschitz constant of the MLP of the $i$-th neighborhood aggregation layer.
    \end{enumerate}
    \label{thm:lipschitz}
\end{theorem}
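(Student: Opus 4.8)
The plan is to prove, by induction on the layer index $k$, a structural description of $\mathbf{h}_v^{(k)}$ that is common to every node $v\in\mathcal V$ (and to every graph in the collection, since all weights are shared), and then to read off the three claims from it. Throughout I use that the initial features are the constant $\mathbf{h}_v^{(0)}=1$.

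For DGCNN and GAT I claim there is a single vector $\mathbf{h}^{(k)}$, depending only on the shared parameters, with $\mathbf{h}_v^{(k)}=\mathbf{h}^{(k)}$ for all $v$. The base case $k=0$ is the hypothesis. For the inductive step, note that both aggregators average the transformed neighbour features with weights that sum to $1$: for DGCNN the weights $\tfrac{1}{d(v)+1}$ over $\mathcal N(v)\cup\{v\}$ sum to one by construction; for GAT, once all incoming features equal the common vector $\mathbf{h}^{(k-1)}$ the attention logits $e_{vu}$ take a single common value, so the softmax returns $\alpha_{vu}=\tfrac{1}{d(v)}$, and these too sum to one over $u\in\mathcal N(v)$. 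Hence in both cases the aggregated message is $\mathbf{W}^{(k)}\mathbf{h}^{(k-1)}$, independent of $v$, and applying the node-independent map ($f$ for DGCNN, $\sigma$ for GAT) yields a value independent of $v$. (For GAT one assumes no isolated vertices, or uses the standard self-loop-augmented variant, so that an empty neighbourhood does not produce the outlier $\sigma(\mathbf 0)$.) This proves claim~1.

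For GCN and GIN-$0$ the invariant is the stronger factorisation $\mathbf{h}_v^{(k)}=s_v^{(k)}\,\mathbf{c}^{(k)}$, where $\mathbf{c}^{(k)}$ is a fixed vector depending only on the weights and $s_v^{(k)}>0$ is a scalar: for GCN, $s_v^{(k)}=[\hat{\mathbf A}^{k}\mathbf 1]_v$ with $\hat{\mathbf A}=\tilde{\mathbf D}^{-1/2}\tilde{\mathbf A}\tilde{\mathbf D}^{-1/2}$ and $\tilde{\mathbf A}=\mathbf A+\mathbf I$; for GIN-$0$, $s_v^{(k)}=[(\mathbf A+\mathbf I)^{k}\mathbf 1]_v$. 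The base case uses $s_v^{(0)}=1$, $\mathbf{c}^{(0)}=1$. In the step, substituting $\mathbf{h}_u^{(k-1)}=s_u^{(k-1)}\mathbf{c}^{(k-1)}$ into the aggregation factors $\mathbf{c}^{(k-1)}$ out, turns the scalar part into one more propagation step ($s_v^{(k)}=[\hat{\mathbf A}\,\mathbf s^{(k-1)}]_v$, resp.\ $[(\mathbf A+\mathbf I)\mathbf s^{(k-1)}]_v$), and leaves the argument of the nonlinearity equal to $s_v^{(k)}$ times a fixed vector ($\mathbf{W}^{(k)}\mathbf{c}^{(k-1)}$, resp.\ the input to $\mathrm{MLP}^{(k)}$). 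Two facts close the step: $s_v^{(k)}>0$ because the self-loops make the diagonal of $\hat{\mathbf A}^{k}$ (resp.\ $(\mathbf A+\mathbf I)^{k}$) strictly positive; and $\mathrm{ReLU}$ — and, once biases are dropped, the whole MLP — is positively homogeneous, so pulling the positive scalar $s_v^{(k)}$ through the nonlinearity gives $\mathbf{h}_v^{(k)}=s_v^{(k)}\mathbf{c}^{(k)}$ with $\mathbf{c}^{(k)}$ the image of $\mathbf{c}^{(k-1)}$ under that layer. This factorisation is the heart of the argument and the step I expect to be the main obstacle: it survives the nonlinearities only because the walk scalars are strictly positive and the maps are positively homogeneous — which is exactly why claim~3 needs the ``biases ignored'' assumption, affine layers destroying the homogeneity. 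Expanding $[\hat{\mathbf A}^{k}\mathbf 1]_v$ over length-$k$ walks $v=v_0,\dots,v_k$ gives each walk the weight $\big(\tilde d(v_1)\cdots\tilde d(v_{k-1})\sqrt{\tilde d(v_0)\tilde d(v_k)}\,\big)^{-1}$, which is precisely the normalisation defining $\tilde w_v^{(k)}$; the unweighted expansion for GIN-$0$ gives the walk count $w_v^{(k)}$. Hence $s_v^{(k)}=\tilde w_v^{(k)}$ for GCN and $s_v^{(k)}=w_v^{(k)}$ for GIN-$0$.

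It remains to derive the Lipschitz bounds. From the factorisation, $\| \mathbf{h}_v^{(k)}-\mathbf{h}_u^{(k)} \|_2=| s_v^{(k)}-s_u^{(k)} |\,\| \mathbf{c}^{(k)} \|_2$, so it suffices to bound $\| \mathbf{c}^{(k)} \|_2$. Since $\mathbf{c}^{(k)}$ is obtained from $\mathbf{c}^{(k-1)}$ by one fully-connected layer (resp.\ one MLP), which maps $\mathbf 0$ to $\mathbf 0$, its Lipschitz constant $L_f^{(k)}$ gives $\| \mathbf{c}^{(k)} \|_2\le L_f^{(k)}\| \mathbf{c}^{(k-1)} \|_2$, and iterating from $\| \mathbf{c}^{(0)} \|_2=1$ yields $\| \mathbf{c}^{(k)} \|_2\le\prod_{i=1}^{k}L_f^{(i)}$. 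Combining this with $s_v^{(k)}=\tilde w_v^{(k)}$ (GCN) and $s_v^{(k)}=w_v^{(k)}$ (GIN-$0$) gives the two inequalities and completes the proof.
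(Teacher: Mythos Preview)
Your proof is correct, and for part~1 (DGCNN and GAT) it is essentially identical to the paper's induction on the fact that the aggregation weights sum to one.

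For parts~2 and~3 (GCN and GIN-$0$) the core idea is the same as the paper's --- positive homogeneity of bias-free ReLU layers/MLPs lets one factor a positive scalar through each layer --- but your packaging is cleaner. The paper first proves the analogue of your factorisation as a separate lemma (all representations at layer $k-1$ are positive scalar multiples of one another) and then performs an explicit layer-by-layer unrolling of the Lipschitz inequality, writing out the cases $k=1$, $k=2$, $\dots$, $k=K$ with nested sums over $\mathcal N^+(\cdot)$. You instead maintain the invariant $\mathbf{h}_v^{(k)}=s_v^{(k)}\mathbf{c}^{(k)}$ directly, which collapses the whole computation to $\|\mathbf{h}_v^{(k)}-\mathbf{h}_u^{(k)}\|_2=|s_v^{(k)}-s_u^{(k)}|\,\|\mathbf{c}^{(k)}\|_2$ and a one-line induction $\|\mathbf{c}^{(k)}\|_2\le L_f^{(k)}\|\mathbf{c}^{(k-1)}\|_2$ (using $f^{(k)}(\mathbf 0)=\mathbf 0$). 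This is shorter and makes the role of the walk scalars $s_v^{(k)}=[\hat{\mathbf A}^k\mathbf 1]_v$ resp.\ $[(\mathbf A+\mathbf I)^k\mathbf 1]_v$ more transparent; the paper's unrolling, on the other hand, makes the walk expansion visible in the computation itself rather than appealing to the matrix-power identity. One small point you leave implicit and the paper states explicitly: GCN as defined in~\cite{kipf2017semi} has no bias term, which is why the homogeneity argument applies to it without an extra assumption.
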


The above result highlights the limitations of the considered models.
Specifically, our results imply that the DGCNN and GAT models encode no structural information of the graph into the learned node representations.
Furthermore, combined with a sum readout function, these representations give rise to graph representations that can only count the number of nodes of each graph.
If the readout function is the mean operator, then all graphs are embedded into the same vector.
With regards to the GIN-$0$ and GCN models, we have bounded their Lipschitz constants with respect to the number of walks and sum of normalized walks starting from the different nodes, respectively.

To experimentally verify the above theoretical results, we trained the GIN-$0$ and GCN models on the IMDB-BINARY and ENZYMES graph classification datasets.
For all pairs of nodes, we computed the Euclidean distance of the number of walks (resp. sum of normalized walks) of length $3$ starting from them.
We also computed the Euclidean distance of the representations of the nodes that emerge at the corresponding (\ie third) layer of GIN-$0$ (resp. GCN).
We finally computed the correlation of the two collections of Euclidean distances and the results are given in Figure~\ref{fig:corr_gnns}.
More details about the experimental protocol are provided in Appendix~\ref{sec:experimental_setup}.
Clearly, the results verify our theoretical results.
The distance of the number of walks is perfectly correlated with the distance of the representations generated by GIN-$0$ with no biases, while the distance of the sum of normalized walks is perfectly correlated with the distance of the representations produced by GCN.
We also computed the Euclidean distance of the representations of the nodes that emerge at the third layer of the standard GIN-$0$ model (with biases), and we compared them against the distances of the number of walks.
We can see that on both datasets, the emerging correlations are very high (equal to $0.99$).
We observed similar values of correlation on other datasets as well, which indicates that the magnitude of the bias terms of the MLPs is actually small and that our assumption of ignoring biases is by no means unrealistic.

\begin{figure*}[t]
    \centering
    \begin{subfigure}{0.32\textwidth}
    \includegraphics[width=\textwidth]{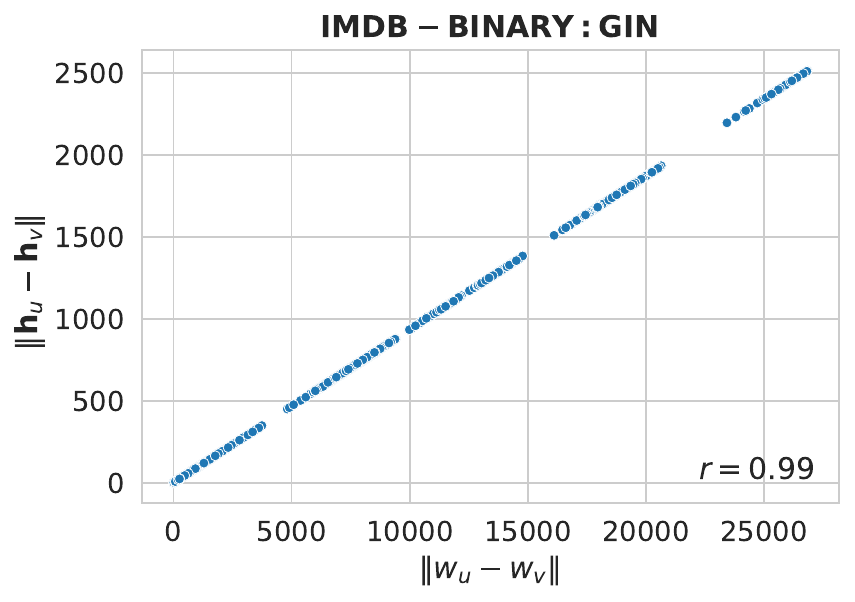}
    \end{subfigure}
    \begin{subfigure}{0.32\textwidth}
    \includegraphics[width=\textwidth]{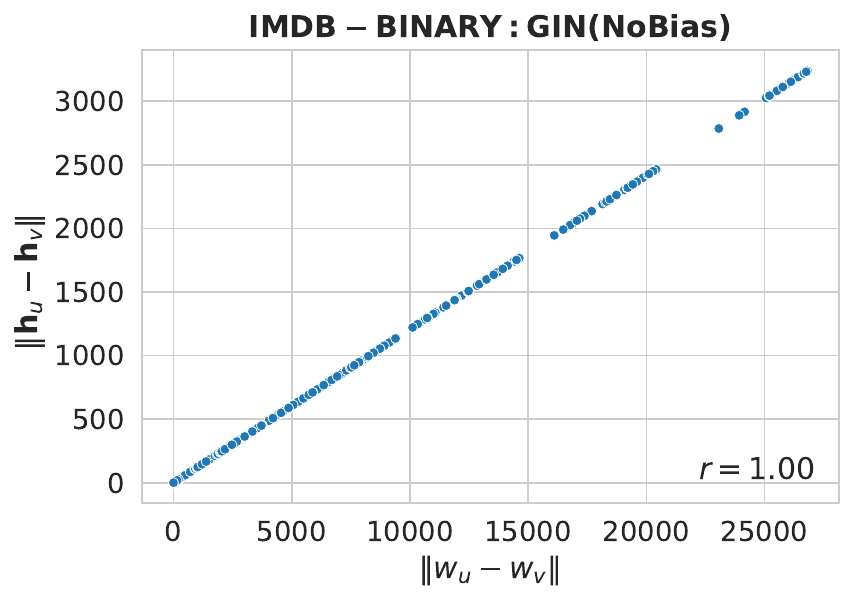}
    \end{subfigure}
    \begin{subfigure}{0.32\textwidth}
    \includegraphics[width=\textwidth]{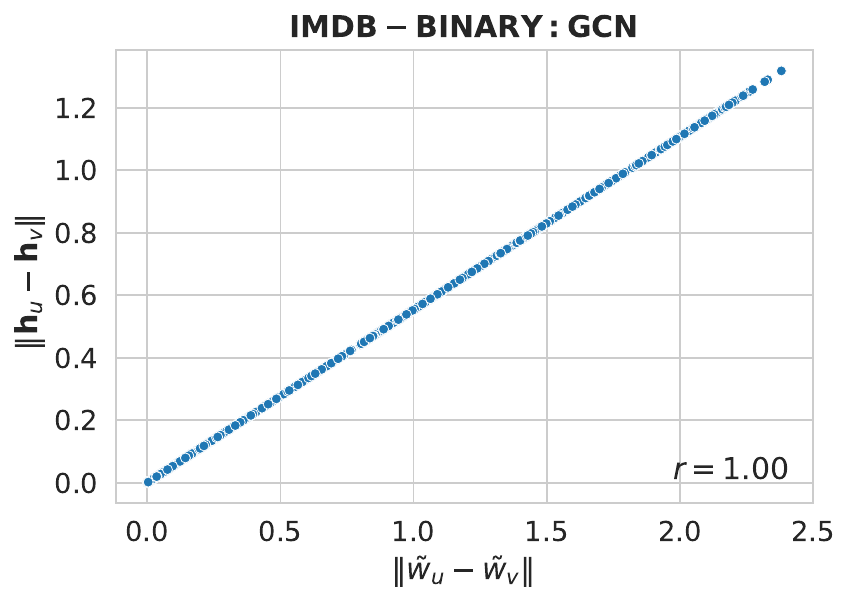}
    \end{subfigure}
    \begin{subfigure}{0.32\textwidth}
    \includegraphics[width=\textwidth]{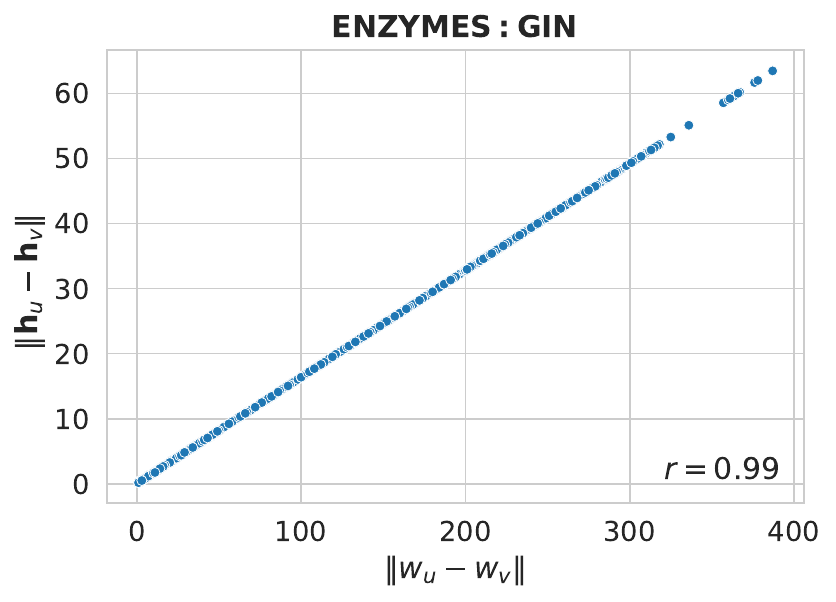}
    \end{subfigure}
    \begin{subfigure}{0.32\textwidth}
    \includegraphics[width=\textwidth]{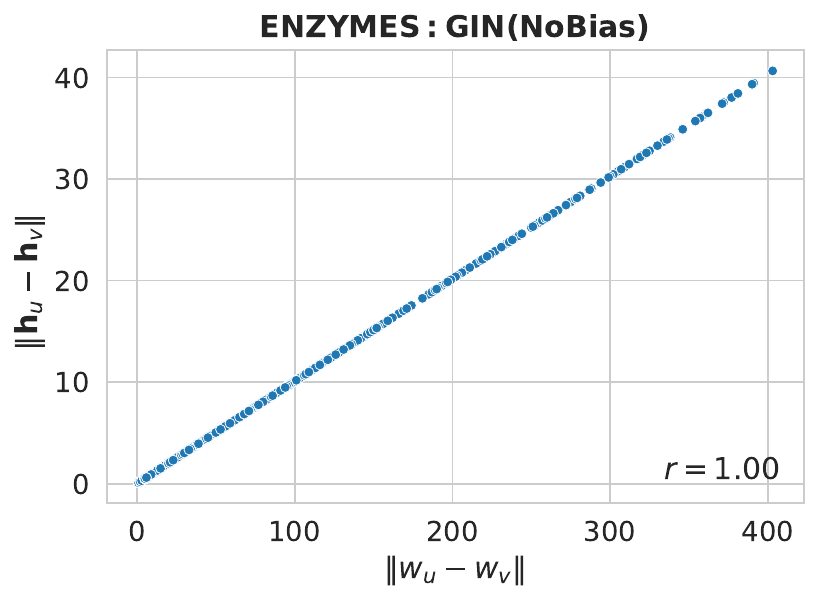}
    \end{subfigure}
    \begin{subfigure}{0.32\textwidth}
    \includegraphics[width=\textwidth]{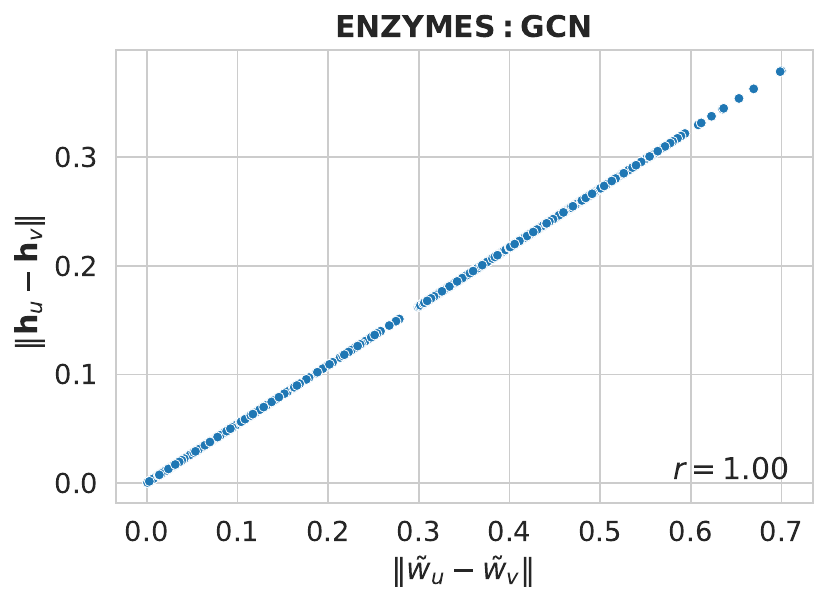}
    \end{subfigure}
    \caption{Euclidean distances of the representations generated at the third layer of the different models vs. Euclidean distances of the number of walks (or sum of normalized walks) of length $3$ starting from the different nodes.
    Nodes are initially annotated with a single feature equal to $1$.}
    \label{fig:corr_gnns}
\end{figure*}

\begin{figure}[t]
    \centering
    \begin{subfigure}{0.25\textwidth}
    \includegraphics[width=\textwidth]{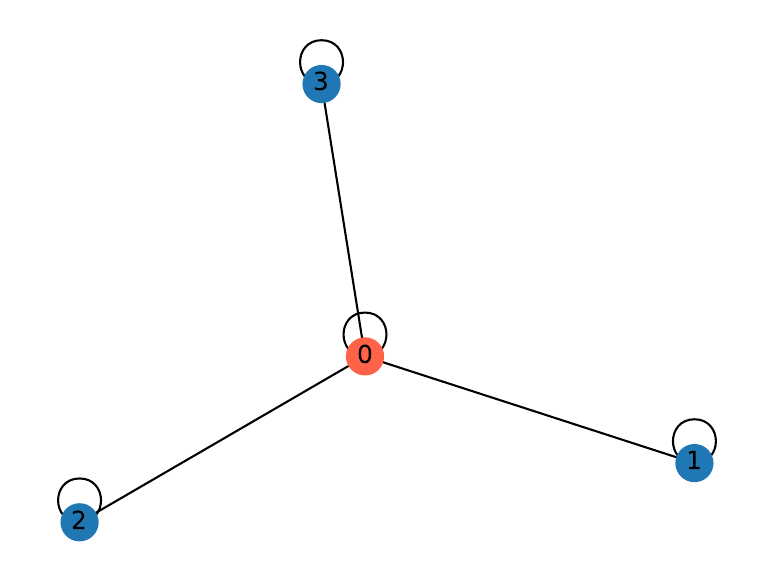}
    \end{subfigure}
    \begin{subfigure}{0.25\textwidth}
    \includegraphics[width=\textwidth]{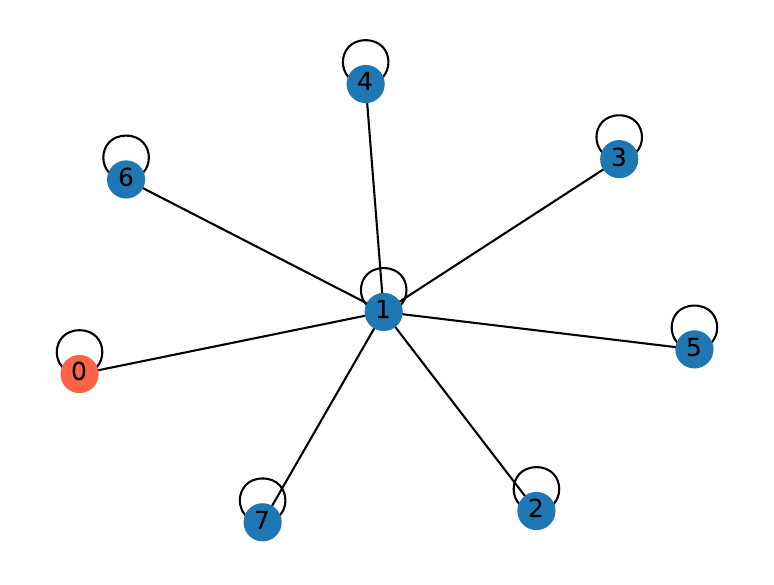}
    \end{subfigure}
    \begin{subfigure}{0.25\textwidth}
    \includegraphics[width=\textwidth]{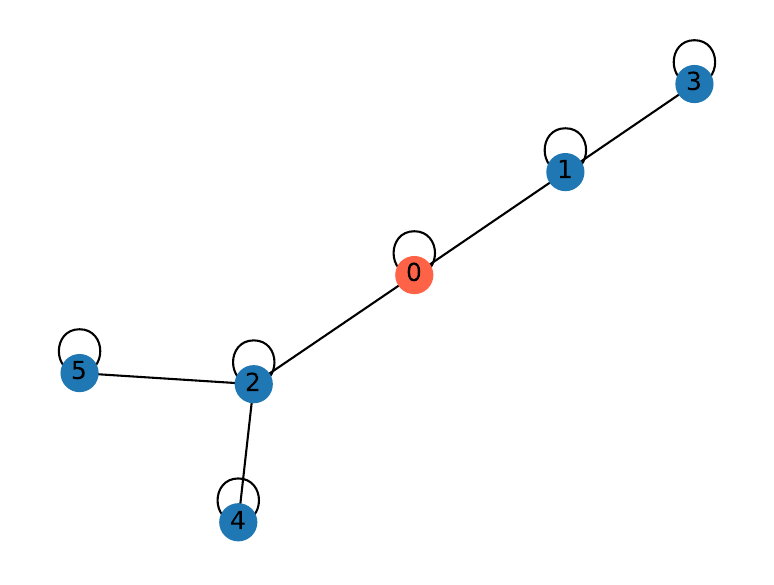}
    \end{subfigure}
    \caption{The number of walks of length $2$ starting from the red nodes of the three graphs is equal to $10$.
    A GIN model that consists of $2$ layers embeds these three nodes close to each other (or to the same representation in case there are no biases) even though they are structurally dissimilar.}
    \label{fig:examples_gin}
\end{figure}

\subsection{Can Structurally Dissimilar Nodes Obtain Identical Representations?}
Based on the above theoretical and empirical findings, it is clear that two nodes can have dissimilar representations at the $k$-th layer, but obtain similar representations at the $(k+1)$-th layer of some MPNN model.
For instance, for GCN, this can be the case if the two nodes have different sums of normalized walks of length $k$, but similar sums of normalized walks of length $k+1$.
Likewise, for GIN-$0$, this can occur if the two nodes have different numbers of walks of length $k$, but similar numbers of walks of length $k+1$.
But can two nodes that have different representations at the $k$-th layer of some MPNN model be embedded into the same vector at the $(k+1)$-th layer of the model?
\begin{observation}
    Let $\mathbf{h}_v^{(k)}$ denote node $v$'s representation at the $k$-th layer of the GCN or the GIN-$0$ model (biases of MLPs are ignored).
    There exist nodes $v$ and $u$ for which $|| \mathbf{h}_v^{(k)} - \mathbf{h}_u^{(k)} ||_2 > 0$, but $|| \mathbf{h}_v^{(k+1)} - \mathbf{h}_u^{(k+1)} ||_2 = 0$ no matter what are the values of the trainable parameters of the $(k+1)$-th layer of the model.
\end{observation}
Figure~\ref{fig:examples_gin} illustrates three nodes (the three red nodes) that have structurally dissimilar neighborhoods, but their representations produced by GIN-$0$ after two neighborhood aggregation layers are very similar to each other (or identical in case biases are omitted).
Let $v,u,z$ denote the three nodes.
For all three graphs, the number of walks of length $2$ starting from the red nodes is equal to $10$ (\ie $w_v^{(2)}=w_u^{(2)}=w_z^{(2)}=10$).
Note also that the three nodes have different values of degree (\ie number of walks of length $1$) from each other and thus GIN-$0$ could learn different representations for them after a single neighborhood aggregation layer.
We also provide in Figure~\ref{fig:examples_gcn} an example of two nodes (the two red nodes) that could obtain different representations at the first layer of a GCN model, but would obtain identical representations at the second layer of the model.
Let $v,u$ denote the two nodes.
For those two nodes, we have that $\tilde{w}_v^{(1)} \neq \tilde{w}_u^{(1)}$, but also that $\tilde{w}_v^{(2)} = \tilde{w}_u^{(2)} \approx 0.890$.

\begin{figure}[t]
    \centering
    \begin{subfigure}{0.25\textwidth}
    \includegraphics[width=\textwidth]{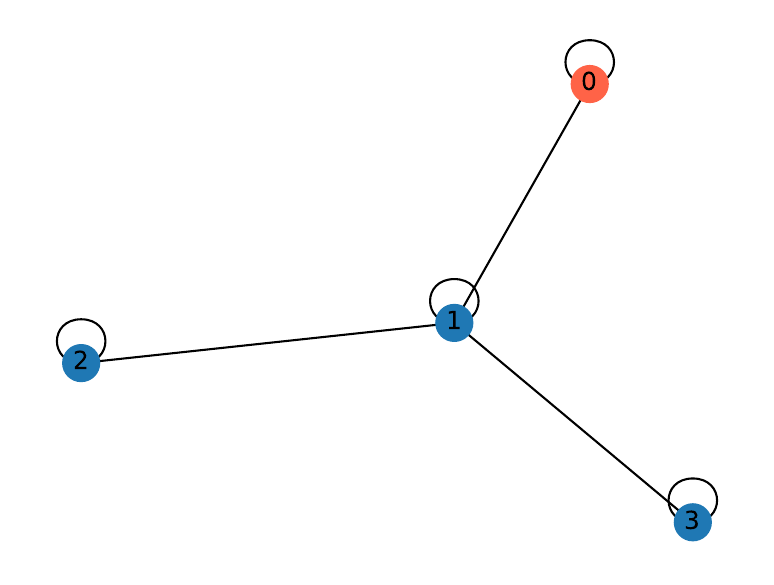}
    \end{subfigure}
    \hspace{2cm}
    \begin{subfigure}{0.25\textwidth}
    \includegraphics[width=\textwidth]{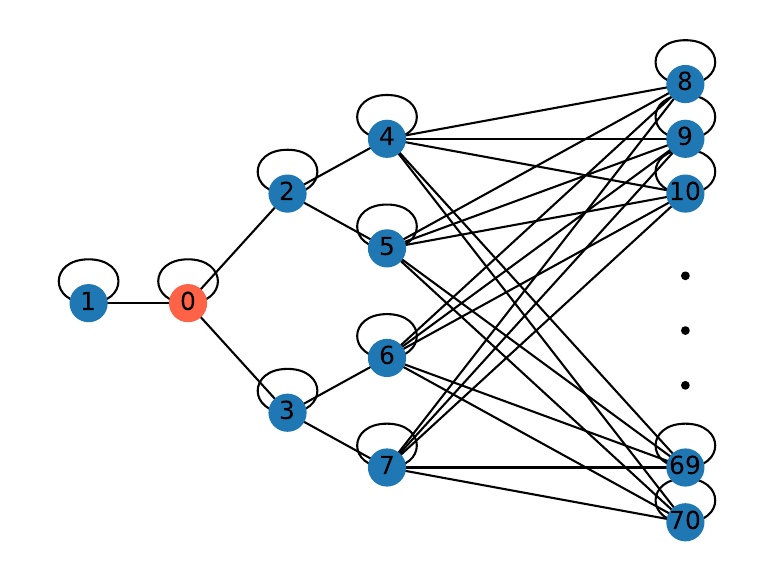}
    \end{subfigure}
    \caption{The sum of normalized walks of length $2$ starting from the red nodes of the two graphs is approximately equal to $0.890$.
    A GCN model that consists of $2$ layers embeds these two nodes to the same representation even though they are structurally dissimilar.}
    \label{fig:examples_gcn}
\end{figure}

In many settings, the local neighborhood of a node provides more information about its structural role than its more global neighborhood.
Indeed, two nodes that have very different values of degree from each other are undoubtedly structurally very dissimilar no matter how their $k$-hop (for $k>1$) neighborhoods look like. 
The jump connections proposed in~\cite{xu2018representation} allow a model to combine the representations of the different layers (\eg by concatenating them), thus capturing both local and more global information about each node's neighborhood.

\begin{figure*}[t]
    \centering
    \begin{subfigure}{0.32\textwidth}
    \includegraphics[width=\textwidth]{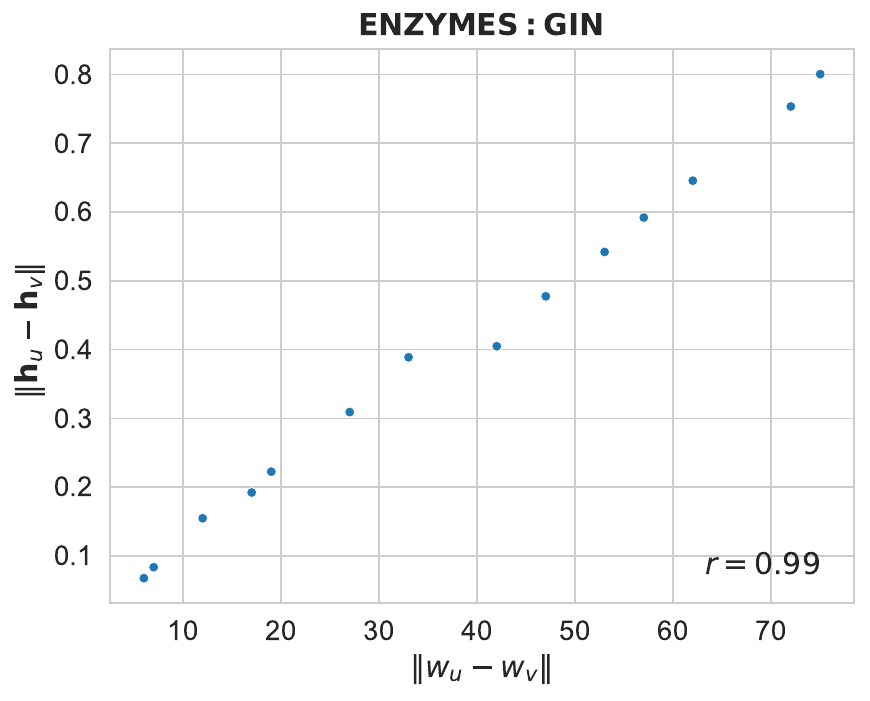}
    \end{subfigure}
    \begin{subfigure}{0.32\textwidth}
    \includegraphics[width=\textwidth]{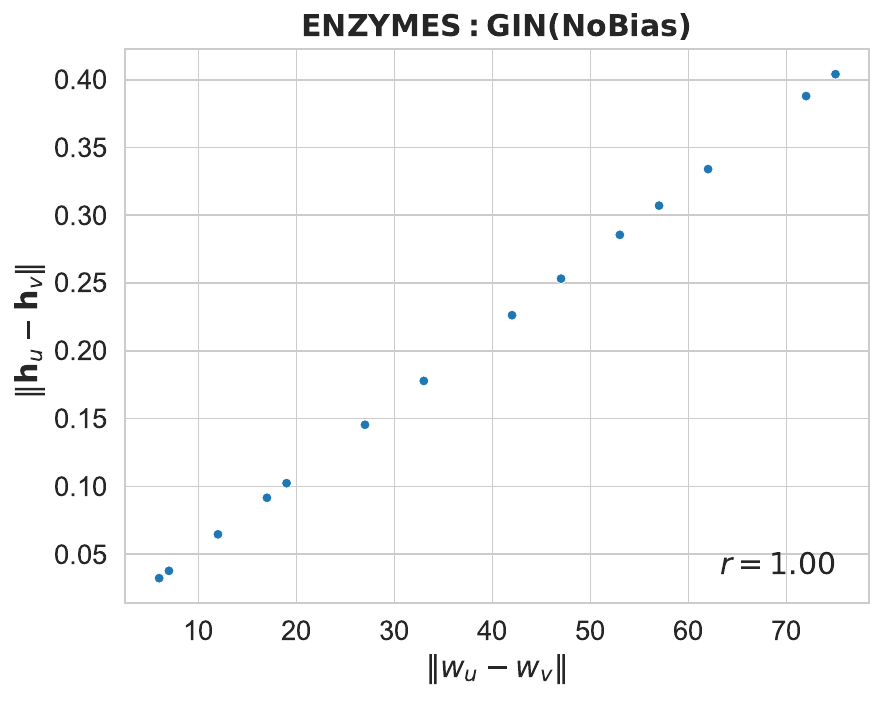}
    \end{subfigure}
    \begin{subfigure}{0.32\textwidth}
    \includegraphics[width=\textwidth]{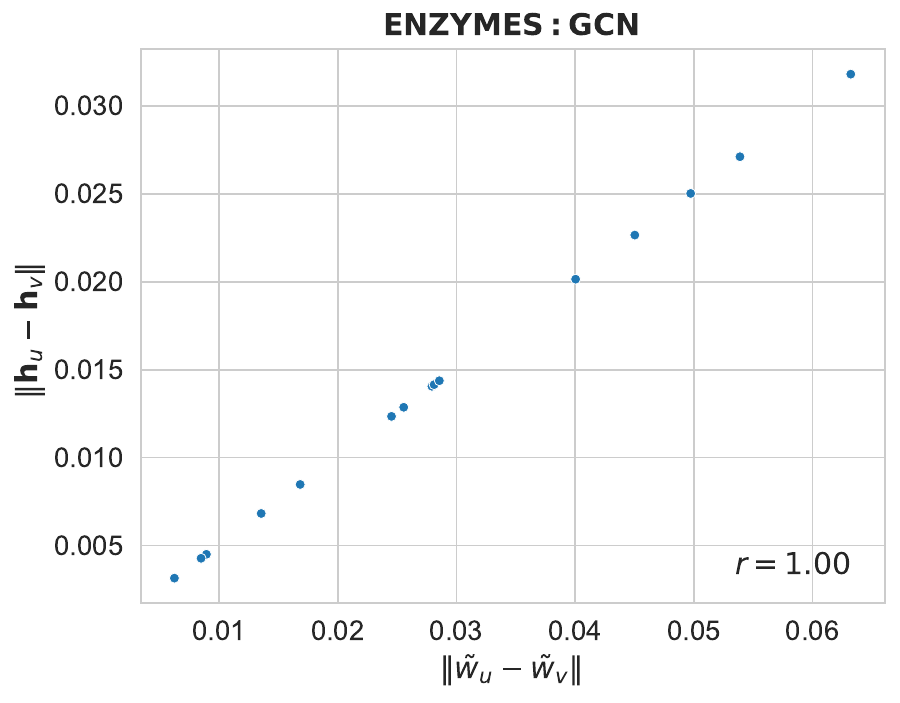}
    \end{subfigure}
    \caption{Euclidean distances of the representations generated at the third layer of the different models vs. Euclidean distances of the number of walks (or sum of normalized walks) of length $3$ starting from the different nodes.
    Nodes $v$ and $u$ correspond to the same node in the original and the perturbed graph, respectively.
    Each perturbed graph has emerged by removing one node from node $v$'s neighborhood.}
    \label{fig:corr_gnns_node}
\end{figure*}

\subsection{Are MPNNs Stable under Perturbations?}
We next capitalize on Theorem~\ref{thm:lipschitz} to assess the stability of MPNNs to small perturbations such as node removal or edge removal.
Note that the Lipschitz constant is a common tool to evaluate neural networks' stability to small perturbations.
Our theoretical result implies that the Euclidean distance between the representation of a node and the node's new representation once some perturbation is applied depends on the decrease (or increase) in the number of walks that start from that node.
We empirically validate our theoretical result on the ENZYMES dataset.
We first split the dataset into training, validation and test sets, and then train the GIN-$0$ and GCN models on the training set.
Finally, we choose one node $v$ of some graph of the test set and use the model to produce its represention $\mathbf{h}_v$ at the third layer of each model.
We then create perturbed versions of the graph.
Each perturbed graph emerges from the original graph by just removing one node (and its adjacent edges) whose shortest path distance from $v$ is at most $2$.
Let $u$ denote the node of the perturbed graph that corresponds to $v$, and $\mathbf{h}_u$ its represention at the third layer of the two models.
Figure~\ref{fig:corr_gnns_node} illustrates how the Euclidean distance between the two nodes varies as a function of the descrease in the number of walks.
We observe that the Euclidean distance between the initial representation of the node and its new representation is highly correlated with the decrease in the number of walks due to the perturbations.
Besides nodes, we also experimented with edge removal (we removed edges whose endpoints are both at distance at most $2$ from $v$), and the results are given in Appendix~\ref{sec:further_results}.

Note that there exist graphs where the removal of a node or of an edge can significantly degrade graph robustness.
Consider for example the graph that is shown in Figure~\ref{fig:example_bridge}.
The removal of node $u$ (or of one of its adjacent edges) disconnects the graph.
This will most likely have a significant impact on the representation of node $v$ computed at the $k$-th layer of GIN where $k > 4$.
\begin{wrapfigure}{r}{0.4\textwidth}
    \centering
    \scriptsize
    \begin{tikzpicture}
        \node[draw,circle,label=below:$v$] (A) at (0,0) {};
        \node[draw,circle] (B) at (0.75,0) {};
        \node[draw,circle,label=below:$u$] (C) at (1.5,0) {};
        \node[draw,circle] (D) at (2.25,0) {};
        \node[draw,circle] (E) at (3,0.75) {};
        \node[draw,circle] (F) at (3,-0.75) {};
        \node[draw,circle] (G) at (3.75,0.75) {};
        \node[draw,circle] (H) at (3.75,-0.75) {};
        \node[draw,circle,label=below:$z$] (I) at (4.5,0) {};

        \draw[-] (A) -- (B);
        \draw[-] (B) -- (C);
        \draw[-] (C) -- (D);
        \draw[-] (D) -- (E);
        \draw[-] (D) -- (F);
        \draw[-] (D) -- (G);
        \draw[-] (D) -- (H);
        \draw[-] (D) -- (I);
        \draw[-] (E) -- (F);
        \draw[-] (E) -- (G);
        \draw[-] (E) -- (H);
        \draw[-] (E) -- (I);
        \draw[-] (F) -- (G);
        \draw[-] (F) -- (H);
        \draw[-] (F) -- (I);
        \draw[-] (G) -- (H);
        \draw[-] (G) -- (I);
        \draw[-] (H) -- (I);

        \draw[-,thick,red] (1.7,0.2) -- (1.3,-0.2);
        \draw[-,thick,red] (1.7,-0.2) -- (1.3,0.2);
    \end{tikzpicture}
    \caption{Example of a graph where the removal of a node disconnects the graph.
    This leads to a large decrease in the number of walks of length $6$ that start from node $v$.}
    \label{fig:example_bridge}
    \vspace{-.4cm}
\end{wrapfigure}
Indeed we observe that the number of walks of length $k$ that start from node $v$ decreases significantly once node $u$ (or one of its adjacent edges) is removed.
There exist $603$ walks of length $6$ from node $v$ (with self-loops), but this number decreases to as few as $64$ once node $u$ is removed.
We initialized $10$ different GIN-$0$ models consisting of $6$ layers and fed the graph of Figure~\ref{fig:example_bridge} into the models (the models were not trained) along with two perturbations of the graph where either node $z$ is removed or node $u$ is removed (which results into a disconnected graph).
We computed the norm of the difference between the representation of node $v$ in the original graph and its representation in each of the perturbed instances at the sixth layer of the models.
We found that when node $z$ is removed, the average norm of the difference is equal to $0.0026$.
When node $u$ is removed, the average norm of the difference is equal to $0.0162$.
This confirms our theory since by removing node $u$ there is a large decrease in the number of walks that start from node $v$.
On the other hand, by removing node $z$, the perturbation in terms of the number of walks that start from node $v$ is not very strong, thus leading to a smaller value of the norm.

\subsection{How Do Initial Node Features Influence Representations?}
So far, we have focused on graphs which do not contain node features.
For our analysis, we assumed that all nodes are annotated with the same feature(s) which is common practice in the field of graph machine learning.
However, in many real-world applications (\eg chemo-informatics), the entities that correspond to the nodes of the graph are usually associated with features and those features are not necessarily the same across all entities.
Furthermore, to improve the models' expressive power, prior work typically annotates nodes with local or global features such as the degrees of the nodes or features extracted from spectral embeddings.
We thus next investigate whether our previous results can be generalized to the setting where the features of a node are different from those of other nodes.

Our next result generalizes the previous results to graphs that contain node features provided that those features all point in the same direction.
Formally, let $\mathcal{G} = \{G_1, \ldots, G_N\}$ be a collection of graphs, and let also $\mathcal{V} = V_1 \cup \ldots \cup V_N$ denote the set that contains the nodes of all those graphs.
Then, if for any two nodes $v,u \in \mathcal{V}$, we have that $\mathbf{h}_v^{(0)}$ is a positive scalar multiple of $\mathbf{h}_u^{(0)}$, then we can still bound the Lipschitz constant of those models with respect to the weighted sum of (normalized) walks where the weights correspond to the initial node features.
Let $\mathbf{w}_v^{(k)}$ denote the sum of weighted walks of length $k$ that start from node $v$.
The weight of a walk is equal to the feature(s) of the last visited node.
For example, if there are $3$ walks of length $1$ that start from node $v$, and the features of the last visited nodes are $[2.2, 1.5]$, $[1.1, 0.75]$ and $[3.3, 2.25]$, then $\mathbf{w}_v^{(k)} = [6.6, 4.5]$.
Note that the $3$ vectors all point in the same direction.
Let also $\tilde{\mathbf{w}}_v^{(k)}$ denote the sum of weighted normalized walks of length $k$ where each walk $(v_1, v_2, \ldots, v_k)$ is equal to $\nicefrac{\mathbf{h}_{v_k}^{(0)}}{\big( (1+d(v_2)) \ldots (1+d(v_{k-1})) \sqrt{(1+d(v_1)) (1+d(v_k))} \big)}$.
\begin{theorem}
    Let $\mathcal{G}=\{G_1, \ldots,G_N\}$ be a collection of graphs.
    Let also $\mathcal{V}=V_1 \cup \ldots \cup V_N$ denote the set that contains the nodes of all graphs.
    All nodes are initially annotated with features that point in the same direction.
    Then, after $k$ neighborhood aggregation layers:
    \begin{enumerate}[leftmargin=0.6cm]
        \item GCN maps nodes to representations related to the sum of weighted normalized walks of length $k$ starting from them:
        \begin{equation*}
            \Big| \Big| \mathbf{h}_v^{(k)} - \mathbf{h}_u^{(k)} \Big| \Big|_2 \leq \prod_{i=1}^k L_f^{(i)} \Big|\Big| \tilde{\mathbf{w}}_v^{(k)} -  \tilde{\mathbf{w}}_u^{(k)} \Big|\Big|_2 
        \end{equation*}
        where $L_f^{(i)}$ denotes the Lipschitz constant of the fully-connected layer of the $i$-th neighborhood aggregation layer.
        \item Under mild assumptions (biases of MLPs are ignored), GIN-$0$ maps nodes to representations that capture the sum of weighted walks of length $k$ starting from them:
        \begin{equation*}
            \Big| \Big| \mathbf{h}_v^{(k)} - \mathbf{h}_u^{(k)} \Big| \Big|_2 \leq \prod_{i=1}^k L_f^{(i)} \Big|\Big| \mathbf{w}_v^{(k)} -  \mathbf{w}_u^{(k)} \Big|\Big|_2 
        \end{equation*}
        where $L_f^{(i)}$ denotes the Lipschitz constant of the MLP of the $i$-th neighborhood aggregation layer.
    \end{enumerate}
    \label{thm:lipschitz_features}
\end{theorem}

\begin{figure}[t]
    \centering
    \begin{subfigure}{0.32\textwidth}
    \includegraphics[width=\textwidth]{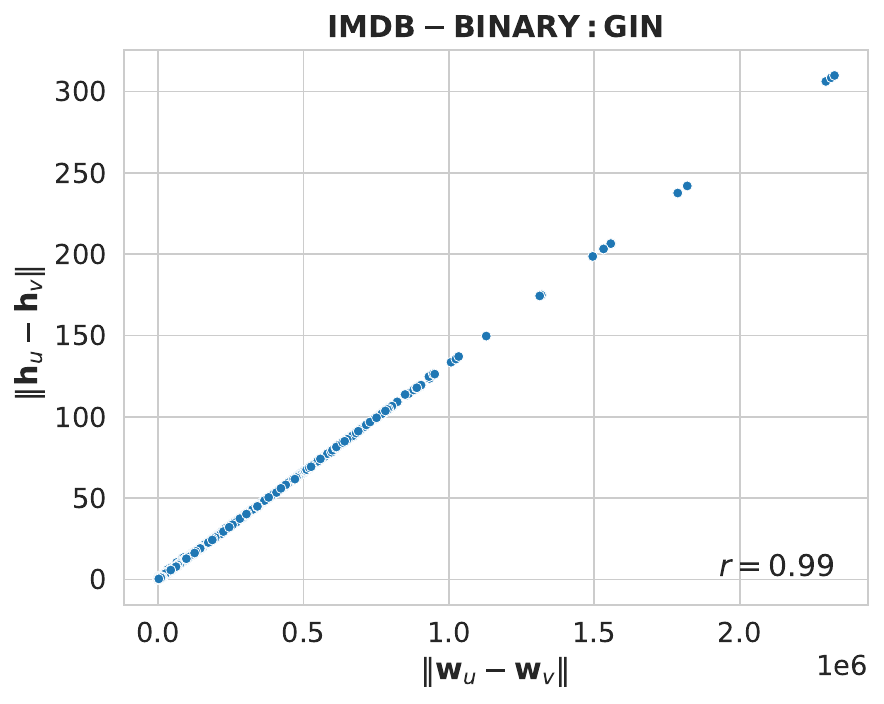}
    \end{subfigure}
    \begin{subfigure}{0.32\textwidth}
    \includegraphics[width=\textwidth]{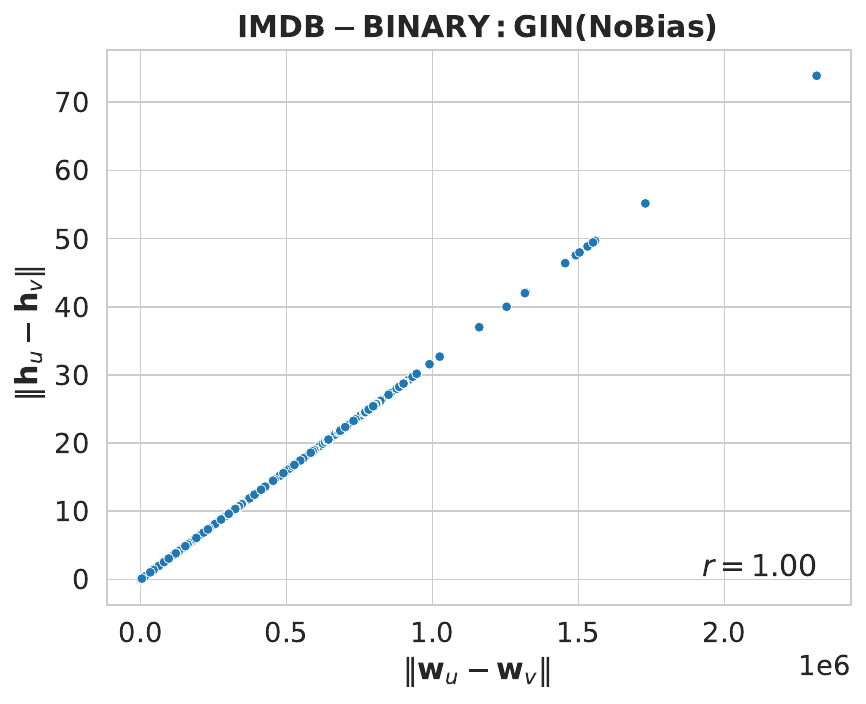}
    \end{subfigure}
    \begin{subfigure}{0.32\textwidth}
    \includegraphics[width=\textwidth]{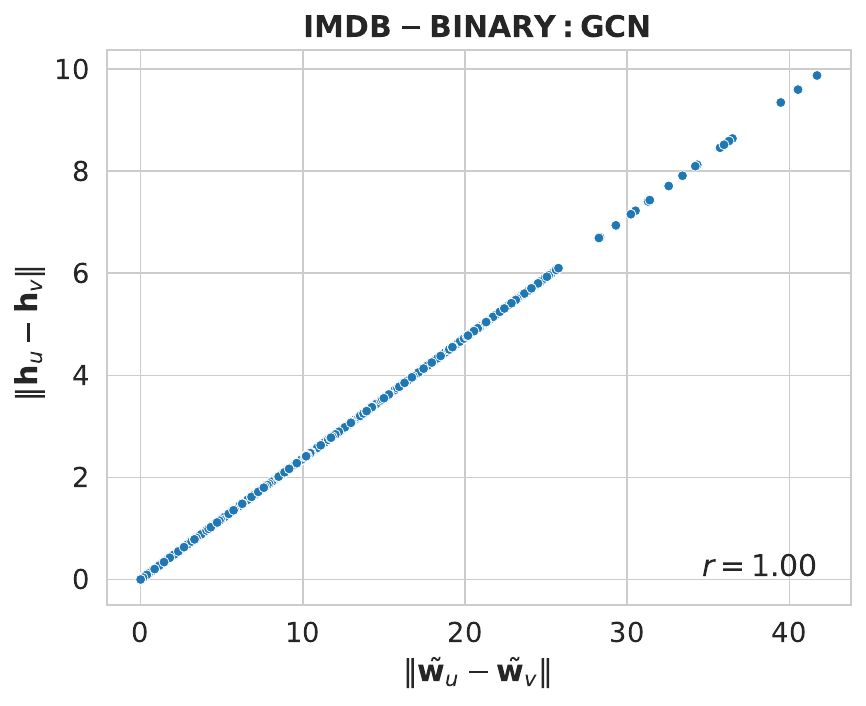}
    \end{subfigure}
    \caption{Euclidean distances of the representations generated at the third layer of the different models vs. Euclidean distances of the number of walks (or sum of normalized walks) of length $3$ starting from the different nodes.
    Each node is initially annotated with a single feature equal to its degree.}
    \label{fig:corr_gnns_features}
\end{figure}

To experimentally verify the above theoretical results, we annotated the nodes of all graphs of the IMDB-BINARY dataset with their degrees.
Note that all node features point in the same direction.
We then trained the GIN-$0$ and GCN models on that dataset.
For pairs of nodes from graphs that belong to the test set, we computed the Euclidean distance of the sum of weighted walks (resp. sum of weighted normalized walks) of length $3$ starting from them.
We also computed the Euclidean distance of the representations of the nodes that emerge at the corresponding (\ie third) layer of GIN-$0$ (resp. GCN).
We finally computed the correlation of the two collections of Euclidean distances and the results are given in Figure~\ref{fig:corr_gnns_features}.
We observe that the empirical results verify our theoretical results.
The distance of the sum of weighted walks is perfectly correlated with the distance of the representations generated by GIN-$0$ with no biases, while the distance of the sum of weighted normalized walks is perfectly correlated with the distance of the representations produced by GCN.
We also computed the Euclidean distance of the representations of the nodes that emerge at the third layer of the standard GIN-$0$ model (with biases), and we compared them against the distances of the sum of weighted walks.
We found that still there is almost perfect correlation between the two quantities.
If the features of the nodes do not point in the same direction, then our results do not hold anymore and other techniques need to be employed to derive bounds.
Deriving bounds for such kind of features is left as future work.

\section{The Phenomenon of Oversquashing from Another Perspective}
Our theoretical results are also related to the phenomenon of oversquashing~\citep{alon2021bottleneck,topping2021understanding} which occurs in MPNNs due to large information compression through bottlenecks.
Specifically, messages that are propagated from distant nodes through certain bottlenecks of the
graph, turn out to have negligible impact on the root node's representation.
Our theoretical results suggest that given two nodes $v$ and $u$, the smaller the value of $\mathbf{w}_{vu}^{(k)}$ (where $\mathbf{w}_{vu}^{(k)}$ denotes the number of (weighted) walks of length $k$ from node $v$ to node $u$), the less the impact of the message(s) from node $u$ to node $v$ on node $v$'s representation.
This becomes more severe in case the norm of $\mathbf{w}_{v}^{(k)}$ is large.

To verify our claim, we constructed a graph classification task to investigate whether an MPNN model can capture the interaction between two nodes.
All the generated graphs are instances of a single family of graphs.
Specifically, each graph consists of two components: ($1$) a complete graph with $n$ nodes; and ($2$) a perfectly balanced $r$-ary tree of height $2$.
The two components are connected by an edge, between one of the nodes of the complete graph and the root of the tree.
\begin{wrapfigure}{r}{0.3\textwidth}
    \centering
    \includegraphics[width=.2\textwidth]{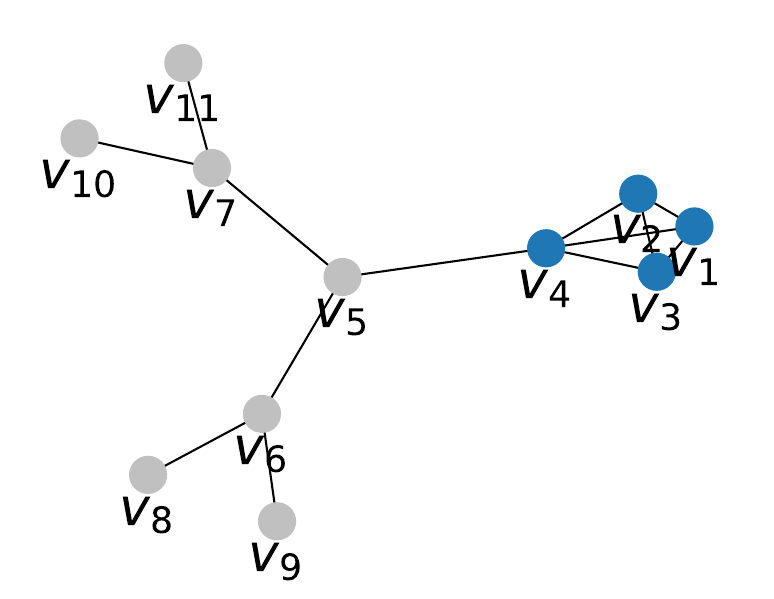}
    \caption{An example of the $\text{CBT}(4, 2)$ graph.}
    \label{fig:cbt}
    \vspace{-.4cm}
\end{wrapfigure}
We use $\text{CBT}(n, r)$ to denote such a graph with parameters $n$ and $r$.
Figure~\ref{fig:cbt} illustrates the $\text{CBT}(4, 2)$ graph.
We create a dataset that contains $\text{CBT}(n, r)$ graphs where $n$ and $r$ take the following values: $n \in \{ 4,6,\ldots, 19\}$ and $r \in \{2,3,\ldots,9 \}$.
In fact, for each combination of $n$ and $r$, we create two copies of the $\text{CBT}(n, r)$ graph.
The one copy belongs to class $0$ and the other copy belongs to class $1$.
The two graphs differ in the node feature of a single node.
While all nodes of the first graph are annotated with a feature equal to $1$, one of the leaves of the $r$-ary tree of the second graph is annotated with a feature equal to $c$.
This gives rise to $256$ graphs in total.
We split the dataset into a training, a validation and a test set with a ratio of $80\%:10\%:10\%$, and then train the GIN-$0$ model (with biases) on the first set.
\begin{figure}[t]
    \centering
    \begin{subfigure}[c]{0.37\textwidth}
        \includegraphics[width=\textwidth]{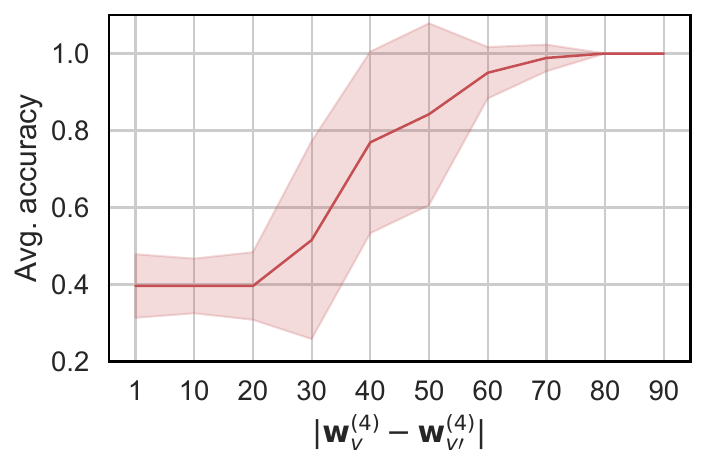}
    \end{subfigure}
    \hspace{1cm}
    \begin{subfigure}[c]{0.37\textwidth}
        \includegraphics[width=\textwidth]{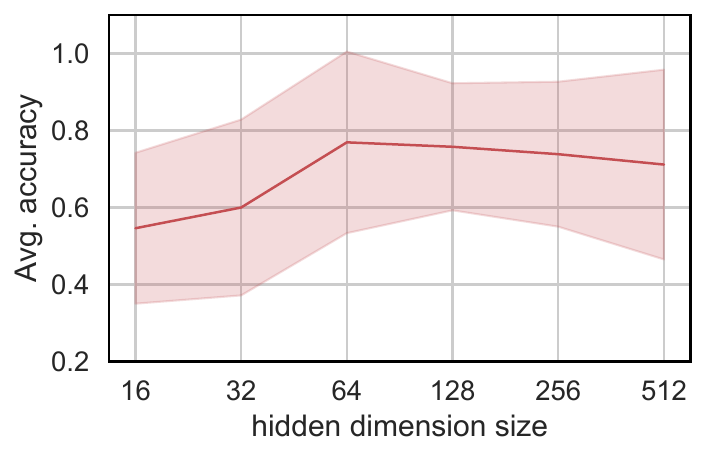}
    \end{subfigure}
    \caption{\textbf{Left}: Average accuracy on the test set of the synthetic dataset as a function of the difference of the sum of weighted walks emanating from nodes $v$ and $v'$. The two nodes correspond to structurally identical nodes of the two instances of the $\text{CBT}(n, r)$ graph. \textbf{Right}: Average accuracy on the test set of the synthetic dataset as a function of the hidden dimension size of the GIN model.}
    \label{fig:oversquashing}
\end{figure}
We set the number of neighborhood aggregation layers to $4$ (\ie shortest path distance between nodes that interact with each other).
To update the node features, we use in each neighborhood aggregation layer a multi-layer perceptron that consists of $2$ fully connected layers.
Each fully connected layer is followed by the ReLU activation function, while batch normalization is applied to the node representations that are produced by the first fully connected layer.
To make a prediction, we feed the representation of one of the nodes of the complete graph whose degree is equal to $n-1$ to a two-layer MLP.
We set the hidden dimension size to $64$.
We train the model for $500$ epochs and use the model that achieved the lowest loss on the validation set to make predictions for the test samples.
We repeat each experiment $10$ times and we report the average accuracies on the test sets.
Note that we set the feature $c$ of a single leaf node to different values (\eg $2$, $11$, etc.) and provide different results for each one of these values.
For given $n$ and $r$, let $v$ denote one node of the complete graph (where $d(v) = n-1$) in the first copy of the $\text{CBT}(n, r)$ graph, and $v'$ denote the corresponding node in the second copy.
Then, we have that $|\mathbf{w}_{v'}^{(4)} - \mathbf{w}_{v}^{(4)}| = c$.
Theorem~\ref{thm:lipschitz_features} implies that the representations of nodes $v$ and $v'$ of the two copies (that belong to different classes) will be close to each other in case $c$ is small.

The results are provided in Figure~\ref{fig:oversquashing} (Left).
We observe that when $|\mathbf{w}_{v'}^{(4)} - \mathbf{w}_{v}^{(4)}|$ is small, the model achieves very low levels of performance.
This is not surprising since for each combination of $n$ and $r$, there are two nodes ($v$ and $v'$) that belong to different classes, but their representations at the fourth layer of the model are very similar to each other.
It is very hard then for the MLP that produces the output to distinguish between samples of class $0$ and samples of class $1$. 
The model's performance increases as $|\mathbf{w}_{v'}^{(4)} - \mathbf{w}_{v}^{(4)}|$ increases.
For $|\mathbf{w}_{v'}^{(4)} - \mathbf{w}_{v}^{(4)}| \geq 80$, the model correctly classifies all test samples. 
Our results provide a different perspective for the phenomenon of oversquashing.
They indicate that it predominantly arises when the number of walks from one node to some other node is disproportionally small compared to the total walks originating from the former.
We also investigate what is the impact of the model's hidden dimension size on its ability to capture the dependence between the two nodes.
We set $c = 41$ (\ie $|\mathbf{w}_{v'}^{(4)} - \mathbf{w}_{v}^{(4)}| = 40$), and we compute the model's average accuracy on the test sets as a function of the hidden dimension size.
The results are shown in Figure~\ref{fig:oversquashing} (Right).
While for small values, it appears that the increase in the hidden dimension size also improves the model's performance, there is no further increase in performance for hidden dimension sizes beyond $64$.
Our results thus indicate that this limitation of MPNNs cannot be addressed just by increasing the hidden dimension size and that other approaches need to be employed.

\section{Conclusion}
In this paper, we focused on four well-established MPNN models and investigated what properties of graphs these models can capture.
First, we considered the case where no node features are available.
We found that two models capture no structural properties of graphs, while the rest of the models learn node representations that capture the sum of (normalized) walks emanating from the different nodes.
We established Lipschitz bounds for these models with respect to the sum of (normalized) walks.
We generalized the above results in case of node features that point in the same direction.
Finally, we provided a different perspective for the phenomenon of oversquashing.





\bibliographystyle{unsrtnat}
\bibliography{reference}

\appendix
\section{Proof of Theorem~\ref{thm:lipschitz}}

We assume that all the nodes of the graph are initially annotated with a single feature equal to $1$.

\subsection{DGCNN}
The DGCNN model updates node representations as follows~\cite{zhang2018end}:
\begin{equation*}
    \begin{split}
        \mathbf{h}_v^{(k)} = f \Bigg( \sum_{u \in \mathcal{N}(v) \cup \{ v\}} \frac{1}{d(v)+1} \mathbf{W}^{(k)} \mathbf{h}_u^{(k-1)} \Bigg) 
    \end{split}
\end{equation*}
We will show by induction that the model maps all nodes to the same vector representation.
We first assume that $\mathbf{h}_v^{(k-1)} = \mathbf{h}_u^{(k-1)} = \mathbf{h}^{(k-1)}$ for all $v,u \in V$.
This is actually true for $k=1$ since $\mathbf{h}_v^{(0)} = 1$ for all $v \in V$.
Then, we have that $\mathbf{W}^{(k)} \mathbf{h}_v^{(k-1)} = \mathbf{W}^{(k)} \mathbf{h}_u^{(k-1)} = \mathbf{W}^{(k)} \mathbf{h}^{(k-1)}$ for all $v,u \in V$.
We also have that:
\begin{equation*}
    \sum_{u \in \mathcal{N}(v) \cup \{ v\}} \frac{1}{d(v)+1} = 1 
\end{equation*}
Thus, we finally have that:
\begin{equation*}
    \begin{split}
        \mathbf{h}_v^{(k)} = f \Bigg( \sum_{u \in \mathcal{N}(v) \cup \{ v\}} \frac{1}{d(v)+1} \mathbf{W}^{(k)} \mathbf{h}_u^{(k-1)} \Bigg) = f \Bigg( \mathbf{W}^{(k)} \mathbf{h}^{(k-1)} \Bigg)
    \end{split}
\end{equation*}
for all $v \in V$.
We have shown that this variant of the GCN model produces the same representation for all nodes of all graphs and thus, it cannot capture any structural information about the neighborhood of each node.

\subsection{GAT}
The GAT model updates node representations as follows~\cite{velivckovic2018graph}:
\begin{equation*}
    \begin{split}
        \mathbf{h}_v^{(k)} = \sigma \Bigg( \sum_{u \in \mathcal{N}(v)} \alpha_{vu} \mathbf{W}^{(k)} \mathbf{h}_u^{(k-1)} \Bigg) 
    \end{split}
\end{equation*}
where $\alpha_{vu}$ is an attention coefficient that indicates the importance of node $u$'s features to node $v$.
Once again, we assume that all the nodes of the graph are initially annotated with a single feature equal to $1$.
We will show by induction that the model maps all nodes to the same vector representation.
We first assume that $\mathbf{h}_v^{(k-1)} = \mathbf{h}_u^{(k-1)} = \mathbf{h}^{(k-1)}$ for all $v,u \in V$.
This is actually true for $k=1$ since $\mathbf{h}_v^{(0)} = 1$ for all $v \in V$.
Then, we have that $\mathbf{W}^{(k)} \mathbf{h}_v^{(k-1)} = \mathbf{W}^{(k)} \mathbf{h}_u^{(k-1)} = \mathbf{W}^{(k)} \mathbf{h}^{(k-1)}$ for all $v,u \in V$.
Since the attention coefficients are normalized, we have:
\begin{equation*}
    \sum_{u \in \mathcal{N}(v)} \alpha_{vu} = 1 
\end{equation*}
Thus, we finally have that:
\begin{equation*}
    \begin{split}
        \mathbf{h}_v^{(k)} = \sigma \Bigg( \sum_{u \in \mathcal{N}(v)} \alpha_{vu} \mathbf{W}^{(k)} \mathbf{h}_u^{(k-1)} \Bigg) = \sigma \Bigg( \mathbf{W}^{(k)} \mathbf{h}^{(k-1)} \Bigg)
    \end{split}
\end{equation*}
for all $v \in V$.
We have shown that the GAT model produces the same representation for all nodes of all graphs and thus, it cannot capture any structural information about the neighborhood of each node.

\subsection{GCN}
The GCN model updates node representations as follows~\cite{kipf2017semi}:
\begin{equation*}
    \mathbf{h}_v^{(k)} = \text{ReLU} \Bigg( \sum_{u \in \mathcal{N}(v) \cup \{v\}} \frac{\mathbf{W}^{(k)} \mathbf{h}_u^{(k-1)}}{\sqrt{(1+d(v)) (1+d(u))}} \Bigg) 
\end{equation*}
Note that the GCN model (as decribed in the original paper~\cite{kipf2017semi}) does not perform an affine transformation of the node features, but instead a linear transformation.
In other words, no biases are present.
Thus, the following holds:
\begin{equation*}
    \begin{split}
        \mathbf{h}_v^{(k)} &= \text{ReLU} \Bigg( \sum_{u \in \mathcal{N}(v) \cup \{v\}} \frac{\mathbf{W}^{(k)} \mathbf{h}_u^{(k-1)}}{\sqrt{(1+d(v)) (1+d(u))}} \Bigg) \\
        &= \text{ReLU} \Bigg( \mathbf{W}^{(k)} \sum_{u \in \mathcal{N}(v) \cup \{v\}}  \frac{\mathbf{h}_u^{(k-1)}}{\sqrt{(1+d(v)) (1+d(u))}} \Bigg) 
    \end{split}
\end{equation*}
We next prove the following Lemma which will be useful in our analysis.
\begin{lemma}
    Let $\mathbf{W}$ denote a matrix.
    Let also $\mathcal{X} = \{ \mathbf{x}_1, \ldots, \mathbf{x}_M \}$ denote a set of vectors such that given any two vectors from the set, one is a positive scalar multiple of the other, \ie if $\mathbf{x}_i, \mathbf{x}_j \in \mathcal{X}$, then $\mathbf{x}_i = a \mathbf{x}_j$ where $a>0$.
    Let also $c_i > 0$ for all $i \in \{1,\ldots,M\}$. 
    Then, the following holds:
    \begin{equation*}
       \sum_{i=1}^M c_i \text{ReLU}(\mathbf{W} \, \mathbf{x}_i) = \text{ReLU} \bigg( \mathbf{W} \sum_{i=1}^M c_i \mathbf{x}_i \bigg)
    \end{equation*}
    \label{lemma:1}
\end{lemma}
\begin{proof}
    For any scalar $a$, we have that $\mathbf{W} (a \mathbf{x}) = a \mathbf{W} \mathbf{x}$.
    Furthermore, for $a > 0$, we have that $\text{ReLU}(\mathbf{W} (a \mathbf{x})) = \text{ReLU}(a \mathbf{W} \mathbf{x}) = a \text{ReLU}(\mathbf{W} \mathbf{x})$.
    Suppose that $\mathbf{x}_2 = a_2 \mathbf{x}_1$, $\mathbf{x}_3 = a_3 \mathbf{x}_1, \ldots$, $\mathbf{x}_M = a_M \mathbf{x}_1$.
    Then, we have that:
    \begin{equation*}
        \begin{split}
            \sum_{i=1}^M c_i \text{ReLU}(\mathbf{W} \, \mathbf{x}_i) &= c_1 \text{ReLU}(\mathbf{W} \,\mathbf{x}_1) + \sum_{i=2}^M c_i \text{ReLU}(\mathbf{W} \, a_i \mathbf{x}_1) \\
            &= c_1 \text{ReLU}(\mathbf{W} \, \mathbf{x}_1) + \sum_{i=2}^M c_i a_i \text{ReLU}(\mathbf{W} \, \mathbf{x}_1) \\
            &= (c_1+a_2 c_2 +\ldots+a_M c_M) \text{ReLU}(\mathbf{W} \, \mathbf{x}_1) \\
            &= \text{ReLU}\Big((c_1+a_2 c_2+\ldots+a_M c_M) \mathbf{W} \, \mathbf{x}_1 \Big) \\
            &= \text{ReLU}\Big(\mathbf{W} (c_1+a_2 c_2+\ldots+a_M c_M) \mathbf{x}_1 \Big) \\
            &= \text{ReLU} \bigg( \mathbf{W} \, \sum_{i=1}^M c_i \mathbf{x}_i \bigg)
        \end{split}
    \end{equation*}
    which concludes the proof.
\end{proof}

We also prove the following Lemma.
\begin{lemma}
    Let $\mathcal{V}$ denote the set of nodes of all graphs and let $\mathcal{X}^{(k-1)} = \oms \mathbf{h}_1^{(k-1)}, \ldots, \mathbf{h}_{|\mathcal{V}|}^{(k-1)} \cms$ be the multiset of node representations that emerged at the $(k-1)$-th layer of the GCN model.
    Suppose that given any two vectors from this multiset, one is a scalar multiple of the other, \ie if $\mathbf{h}_i^{(k-1)}, \mathbf{h}_j^{(k-1)} \in \mathcal{X}^{(k-1)}$, then $\mathbf{h}_i^{(k-1)} = a \mathbf{h}_j^{(k-1)}$ where $a>0$.
    Then, the same holds for the node representations that emerge at the $k$-th layer of the model, \ie for any two vectors $\mathbf{h}_i^{(k)}, \mathbf{h}_j^{(k)} \in \mathcal{X}^{(k)} = \oms \mathbf{h}_1^{(k)}, \ldots, \mathbf{h}_{|\mathcal{V}|}^{(k)} \cms$, we have that $\mathbf{h}_i^{(k)} = a \mathbf{h}_j^{(k)}$ where $a>0$.
    \label{lemma:2}
\end{lemma}
\begin{proof}
    For $a > 0$, we have that $\mathbf{W} (a \mathbf{x}) = a \mathbf{W} \mathbf{x}$.
    Furthermore, we also have that $\text{ReLU}(a \mathbf{x}) = a \text{ReLU}(\mathbf{x})$.
    We have assumed that given an arbitrary node $w \in \mathcal{V}$, then for any node $u \in \mathcal{V}$, we have that $\mathbf{h}_u^{(k-1)} = a_u \mathbf{h}_w^{(k-1)}$.
    Then, given any node $v \in \mathcal{V}$, its representation is updated as follows:
    \begin{equation*}
        \begin{split}
            \mathbf{h}_v^{(k)} = \text{ReLU} \Bigg( \sum_{u \in \mathcal{N}(v) \cup \{v\}} \frac{\mathbf{W}^{(k)} \mathbf{h}_u^{(k-1)}}{\sqrt{(1+d(v)) (1+d(u))}} \Bigg) &= \text{ReLU} \Bigg( \sum_{u \in \mathcal{N}(v) \cup \{v\}} \frac{\mathbf{W}^{(k)} a_u \mathbf{h}_w^{(k-1)}}{\sqrt{(1+d(v)) (1+d(u))}} \Bigg) \\
            &= \sum_{u \in \mathcal{N}(v) \cup \{v\}} a_u \text{ReLU} \Bigg( \frac{\mathbf{W}^{(k)} \mathbf{h}_w^{(k-1)}}{\sqrt{(1+d(v)) (1+d(u))}} \Bigg) \\
            &= \text{ReLU} \Bigg( \frac{\mathbf{W}^{(k)} \mathbf{h}_w^{(k-1)}}{\sqrt{(1+d(v)) (1+d(u))}} \Bigg) \\
        \end{split}
    \end{equation*}
    where $c = \sum_{u \in \mathcal{N}(v) \cup \{v\}} a_u > 0$.
    Since node $v$ is an arbitrary node, we observe that the representations of all nodes point in the same direction.
\end{proof}
Thus, the above Lemma suggests that the node representations produced by GCN are either scalar multiples of each other and they point in the same direction or are all equal to the all-zeros vector.

Let $L_f^{(k)}$ denote the Lipschitz constant associated with the fully connected layer of the $k$-th neighborhood aggregation layer of GCN.
In what follows, we also denote $\mathcal{N}(v) \cup \{v\}$ as $\mathcal{N}^+(v)$ and $d(v)+1$ as $d_v^+$.
Then, we have:
\begin{adjustwidth}{-50pt}{0pt}
\begin{align*}
    || \mathbf{h}_v^{(1)} - \mathbf{h}_u^{(1)} ||_2 &= \Bigg|\Bigg| \text{ReLU} \Bigg( \sum_{w \in \mathcal{N}^+(v)} \frac{\mathbf{W}^{(1)} \mathbf{h}_w^{(0)}}{\sqrt{d_v^+ d_w^+}} \Bigg) - \text{ReLU} \Bigg( \sum_{w \in \mathcal{N}^+(u)} \frac{\mathbf{W}^{(1)} \mathbf{h}_w^{(0)}}{\sqrt{d_u^+ d_w^+}} \Bigg)  \Bigg|\Bigg|_2 \\
    &= \Bigg|\Bigg| \text{ReLU} \Bigg( \mathbf{W}^{(1)} \sum_{w \in \mathcal{N}^+(v)} \frac{\mathbf{h}_w^{(0)}}{\sqrt{d_v^+ d_w^+}} \Bigg) - \text{ReLU} \Bigg( \mathbf{W}^{(1)} \sum_{w \in \mathcal{N}^+(u)} \frac{\mathbf{h}_w^{(0)}}{\sqrt{d_u^+ d_w^+}} \Bigg)  \Bigg|\Bigg|_2 \\
    &\leq L_f^{(1)} \Bigg|\Bigg| \sum_{w \in \mathcal{N}^+(v)} \frac{1}{{\sqrt{d_v^+ d_w^+}}} \quad - \sum_{w \in \mathcal{N}^+(u)} \frac{1}{\sqrt{d_u^+ d_w^+}} \Bigg|\Bigg|_2 \\[5ex]
    || \mathbf{h}_v^{(2)} - \mathbf{h}_u^{(2)} ||_2 &= \Bigg|\Bigg| \text{ReLU} \Bigg( \sum_{w \in \mathcal{N}^+(v)} \frac{\mathbf{W}^{(2)} \mathbf{h}_w^{(1)}}{\sqrt{d_v^+ d_w^+}} \Bigg) - \text{ReLU} \Bigg( \sum_{w \in \mathcal{N}^+(u)} \frac{\mathbf{W}^{(2)} \mathbf{h}_w^{(1)}}{\sqrt{d_u^+ d_w^+}} \Bigg)  \Bigg|\Bigg|_2 \\
    &= \Bigg|\Bigg| \text{ReLU} \Bigg( \mathbf{W}^{(2)} \sum_{w \in \mathcal{N}^+(v)} \frac{\mathbf{h}_w^{(1)}}{\sqrt{d_v^+ d_w^+}} \Bigg) - \text{ReLU} \Bigg( \mathbf{W}^{(2)} \sum_{w \in \mathcal{N}^+(u)} \frac{\mathbf{h}_w^{(1)}}{\sqrt{d_u^+ d_w^+}} \Bigg)  \Bigg|\Bigg|_2 \\
    &\leq L_f^{(2)} \Bigg|\Bigg| \sum_{w \in \mathcal{N}^+(v)} \frac{1}{{\sqrt{d_v^+ d_w^+}}} \text{ReLU} \Bigg( \sum_{x \in \mathcal{N}^+(w)} \frac{\mathbf{W}^{(1)} \mathbf{h}_w^{(0)}}{\sqrt{d_w^+ d_x^+}} \Bigg) - \sum_{w \in \mathcal{N}^+(u)} \frac{1}{{\sqrt{d_u^+ d_w^+}}} \text{ReLU} \Bigg( \sum_{x \in \mathcal{N}^+(w)} \frac{\mathbf{W}^{(1)} \mathbf{h}_w^{(0)}}{\sqrt{d_w^+ d_x^+}} \Bigg) \Bigg|\Bigg|_2 \\
    &= L_f^{(2)} \Bigg|\Bigg| \text{ReLU} \Bigg( \mathbf{W}^{(1)} \sum_{w \in \mathcal{N}^+(v)} \sum_{x \in \mathcal{N}^+(w)} \frac{\mathbf{h}_w^{(0)}}{\sqrt{d_v^+ d_w^+} \sqrt{d_w^+ d_x^+}} \Bigg) - \text{ReLU} \Bigg( \mathbf{W}^{(1)} \sum_{w \in \mathcal{N}^+(u)} \sum_{x \in \mathcal{N}^+(w)} \frac{\mathbf{h}_w^{(0)}}{\sqrt{d_u^+ d_w^+} \sqrt{d_w^+ d_x^+}} \Bigg) \Bigg|\Bigg|_2 \\
    &\qquad \qquad \qquad \qquad \qquad \qquad \qquad \qquad \qquad \qquad \qquad \qquad \qquad \qquad \qquad \qquad \qquad \qquad \qquad \qquad (\text{due to Lemmas~\ref{lemma:1},~\ref{lemma:2}}) \\
    &\leq L_f^{(2)} L_f^{(1)} \Bigg|\Bigg| \sum_{w \in \mathcal{N}^+(v)} \sum_{x \in \mathcal{N}^+(w)} \frac{1}{\sqrt{d_v^+ d_w^+} \sqrt{d_w^+ d_x^+}} - \sum_{w \in \mathcal{N}^+(u)} \sum_{x \in \mathcal{N}^+(w)} \frac{1}{\sqrt{d_u^+ d_w^+} \sqrt{d_w^+ d_x^+}} \Bigg|\Bigg|_2 \\
    &= L_f^{(2)} L_f^{(1)} \Bigg|\Bigg| \sum_{w \in \mathcal{N}^+(v)} \sum_{x \in \mathcal{N}^+(w)} \frac{1}{d_w^+ \sqrt{d_v^+ d_x^+}} - \sum_{w \in \mathcal{N}^+(u)} \sum_{x \in \mathcal{N}^+(w)} \frac{1}{d_w^+ \sqrt{d_u^+ d_x^+}} \Bigg|\Bigg|_2 \\[5ex]
    &\quad\vdots\\[5ex]
    || \mathbf{h}_v^{(K)} - \mathbf{h}_u^{(K)} ||_2 &= \Bigg|\Bigg| \text{ReLU} \Bigg( \sum_{w \in \mathcal{N}^+(v)} \frac{\mathbf{W}^{(K)} \mathbf{h}_w^{(K-1)}}{\sqrt{d_v^+ d_w^+}} \Bigg) - \text{ReLU} \Bigg( \sum_{w \in \mathcal{N}^+(u)} \frac{\mathbf{W}^{(K)} \mathbf{h}_w^{(K-1)}}{\sqrt{d_u^+ d_w^+}} \Bigg)  \Bigg|\Bigg|_2 \\
    &= \Bigg|\Bigg| \text{ReLU} \Bigg( \mathbf{W}^{(K)} \hspace{-.1cm} \sum_{w \in \mathcal{N}^+(v)} \frac{\mathbf{h}_w^{(K-1)}}{\sqrt{d_v^+ d_w^+}} \Bigg) - \text{ReLU} \Bigg( \mathbf{W}^{(K)} \sum_{w \in \mathcal{N}^+(u)} \frac{\mathbf{h}_w^{(K-1)}}{\sqrt{d_u^+ d_w^+}} \Bigg)  \Bigg|\Bigg|_2 \\
    &\leq L_f^{(K)} \Bigg|\Bigg| \sum_{w \in \mathcal{N}^+(v)} \frac{1}{{\sqrt{d_v^+ d_w^+}}} \text{ReLU} \Bigg( \sum_{x \in \mathcal{N}^+(w)} \frac{\mathbf{W}^{(K-1)} \mathbf{h}_w^{(K-2)}}{\sqrt{d_w^+ d_x^+}} \Bigg) \\
    &\qquad \qquad \qquad \qquad \qquad \qquad - \sum_{w \in \mathcal{N}^+(u)} \frac{1}{{\sqrt{d_u^+ d_w^+}}} \text{ReLU} \Bigg( \sum_{x \in \mathcal{N}^+(w)} \frac{\mathbf{W}^{(K-1)} \mathbf{h}_w^{(K-2)}}{\sqrt{d_w^+ d_x^+}} \Bigg) \Bigg|\Bigg|_2 \\
    &= L_f^{(K)} \Bigg|\Bigg| \text{ReLU} \Bigg( \mathbf{W}^{(K-1)} \sum_{w \in \mathcal{N}^+(v)} \sum_{x \in \mathcal{N}^+(w)} \frac{\mathbf{h}_w^{(K-2)}}{\sqrt{d_v^+ d_w^+} \sqrt{d_w^+ d_x^+}} \Bigg) \\
    &\qquad \qquad \qquad \qquad \qquad \qquad - \text{ReLU} \Bigg( \mathbf{W}^{(K-1)} \sum_{w \in \mathcal{N}^+(u)} \sum_{x \in \mathcal{N}^+(w)} \frac{\mathbf{h}_w^{(K-2)}}{\sqrt{d_u^+ d_w^+} \sqrt{d_w^+ d_x^+}} \Bigg) \Bigg|\Bigg|_2 (\text{due to Lemmas~\ref{lemma:1},~\ref{lemma:2}})\\
    &\leq L_f^{(K)} L_f^{(K-1)} \ldots \, L_f^{(2)}  \Bigg|\Bigg| \sum_{w \in \mathcal{N}^+(v)} \ldots \sum_{x \in \mathcal{N}^+(y)} \frac{1}{\sqrt{d_v^+ d_w^+} \ldots \sqrt{d_x^+ d_y^+}} \text{ReLU} \Bigg( \sum_{z \in \mathcal{N}^+(x)} \frac{\mathbf{W}^{(1)} \mathbf{h}_z^{(0)}}{\sqrt{d_x^+ d_z^+}} \Bigg) \\
    &\qquad \qquad \qquad \qquad \qquad \qquad - \sum_{w \in \mathcal{N}^+(u)} \ldots \sum_{x \in \mathcal{N}^+(y)} \frac{1}{\sqrt{d_u^+ d_w^+} \ldots \sqrt{d_y^+ d_x^+}} \text{ReLU} \Bigg( \sum_{z \in \mathcal{N}^+(x)} \frac{\mathbf{W}^{(1)} \mathbf{h}_z^{(0)}}{\sqrt{d_x^+ d_z^+}} \Bigg) \Bigg|\Bigg|_2 \\
    &= L_f^{(K)} L_f^{(K-1)} \ldots \, L_f^{(2)}  \Bigg|\Bigg| \text{ReLU} \Bigg( \mathbf{W}^{(1)} \sum_{w \in \mathcal{N}^+(v)} \ldots \sum_{x \in \mathcal{N}^+(y)} \sum_{z \in \mathcal{N}^+(x)} \frac{\mathbf{h}_z^{(0)}}{\sqrt{d_v^+ d_w^+} \ldots \sqrt{d_x^+ d_y^+} \sqrt{d_x^+ d_z^+}} \Bigg) \\
    &\qquad \qquad \qquad \qquad - \text{ReLU} \Bigg( \mathbf{W}^{(1)} \sum_{w \in \mathcal{N}^+(u)} \sum_{x \in \mathcal{N}^+(y)} \sum_{z \in \mathcal{N}^+(x)} \frac{\mathbf{h}_z^{(0)}}{\sqrt{d_u^+ d_w^+} \ldots \sqrt{d_y^+ d_x^+} \sqrt{d_x^+ d_z^+}} \Bigg) \Bigg|\Bigg|_2 (\text{due to Lemmas~\ref{lemma:1},~\ref{lemma:2}})\\
    &\leq L_f^{(K)} L_f^{(K-1)} \ldots \, L_f^{(1)}  \Bigg|\Bigg| \sum_{w \in \mathcal{N}^+(v)} \ldots \sum_{x \in \mathcal{N}^+(y)} \sum_{z \in \mathcal{N}^+(x)} \frac{1}{\sqrt{d_v^+ d_w^+} \ldots \sqrt{d_y^+ d_x^+} \sqrt{d_x^+ d_z^+}} \\
    &\qquad \qquad \qquad \qquad \qquad \qquad \qquad \qquad \qquad - \sum_{w \in \mathcal{N}^+(u)} \ldots \sum_{x \in \mathcal{N}^+(y)} \sum_{z \in \mathcal{N}^+(x)} \frac{1}{\sqrt{d_u^+ d_w^+} \ldots \sqrt{d_y^+ d_x^+} \sqrt{d_x^+ d_z^+}} \Bigg|\Bigg|_2 \\
    &= L_f^{(K)} L_f^{(K-1)} \ldots \, L_f^{(1)}  \Bigg|\Bigg| \sum_{w \in \mathcal{N}^+(v)} \ldots \sum_{x \in \mathcal{N}^+(y)} \sum_{z \in \mathcal{N}^+(x)} \frac{1}{d_w^+ \ldots d_y^+ d_x^+ \sqrt{d_v^+ d_z^+}} \\
    &\qquad \qquad \qquad \qquad \qquad \qquad \qquad \qquad \qquad - \sum_{w \in \mathcal{N}^+(u)} \ldots \sum_{x \in \mathcal{N}^+(y)} \sum_{z \in \mathcal{N}^+(x)} \frac{1}{d_w^+ \ldots d_y^+ d_x^+ \sqrt{d_u^+ d_z^+}} \Bigg|\Bigg|_2 \\
\end{align*}
\end{adjustwidth}
It turns out that the node representations learned at the $k$-th layer of a GCN model are related to the sum of normalized walks of length $k$ starting from each node.
Given a walk of length $k$ consisting of the following nodes $(v_1, v_2, \ldots, v_k)$, the walk is normalized by the product of the degrees of nodes $v_2,\ldots, v_{k-1}$ each increased by $1$ and of the square roots of the degrees of nodes $v_1$ and $v_k$ also increased by $1$.
Thus, the contribution of each walk is inversely proportional to the degrees of the nodes of which the walk is composed.

\subsection{GIN-$0$}
The GIN-$0$ model updates node representations as follows~\cite{xu2019how}:
\begin{equation}
    \mathbf{h}_v^{(k)} = \text{MLP}^{(k)} \Bigg( \Big( 1 + \epsilon^{(k)} \Big) \mathbf{h}_v^{(k-1)} + \sum_{u \in \mathcal{N}(v)} \mathbf{h}_u^{(k-1)} \Bigg) = \text{MLP}^{(k)} \Bigg( \sum_{u \in \mathcal{N}(v) \cup \{v\}} \mathbf{h}_u^{(k-1)} \Bigg)
    \label{eq:gin0}
\end{equation}

We make the following assumption.
\begin{assumption}
    We assume that the biases of the fully-connected layers of all MLPs are equal to zero vectors.
\end{assumption}
Our results are also valid when given fully-connected layers of the form $\text{fc}(\mathbf{x}) = \mathbf{W} \mathbf{x} + \mathbf{b}$, the following holds $||\mathbf{b}||  \ll || \mathbf{W} \mathbf{x} ||$.
Note that the activation function of the MLPs of the GIN-$0$ model is the ReLU function~\cite{xu2019how}.
Without loss of generality, we assume that the MLPs consist of two fully-connected layers.
Given the above, the update function of the GIN-$0$ model takes the following form:
\begin{equation}
    \text{MLP}^{(k)}(\mathbf{x}) = \text{ReLU} \Bigg( \mathbf{W}_2^{(k)} \, \text{ReLU} \bigg( \mathbf{W}_1^{(k)} \mathbf{x} \bigg) \Bigg)
    \label{eq:mlp}
\end{equation}
We next prove the following Lemma which will be useful in our analysis.
\begin{lemma}
    Let $\text{MLP}$ denote the model defined in Equation~\eqref{eq:mlp} above.
    Let also $\mathcal{X} = \{ \mathbf{x}_1, \ldots, \mathbf{x}_M \}$ denote a set of vectors such that given any two vectors from the set, one is a scalar multiple of the other, \ie if $\mathbf{x}_i, \mathbf{x}_j \in \mathcal{X}$, then $\mathbf{x}_i = a \mathbf{x}_j$ where $a>0$.
    Then, the following holds:
    \begin{equation*}
       \sum_{i=1}^M \text{MLP}(\mathbf{x}_i) = \text{MLP} \bigg( \sum_{i=1}^M \mathbf{x}_i \bigg)
    \end{equation*}
    \label{lemma:3}
\end{lemma}
\begin{proof}
    For $a > 0$, we have that $\mathbf{W} (a \mathbf{x}) = a \mathbf{W} \mathbf{x}$.
    Furthermore, we also have that $\text{ReLU}(a \mathbf{x}) = a \text{ReLU}(\mathbf{x})$.
    Then, we have:
    \begin{equation*}
       \text{MLP}(a \mathbf{x}) = \text{ReLU} \Bigg( \mathbf{W}_2 \, \text{ReLU} \bigg( \mathbf{W}_1 (a \mathbf{x}) \bigg) \Bigg) = a \text{ReLU} \Bigg( \mathbf{W}_2 \, \text{ReLU} \bigg( \mathbf{W}_1 (\mathbf{x}) \bigg) \Bigg) = a \text{MLP}( \mathbf{x})
    \end{equation*}
    Suppose that $\mathbf{x}_2 = a_2 \mathbf{x}_1$, $\mathbf{x}_3 = a_3 \mathbf{x}_1, \ldots$, $\mathbf{x}_M = a_M \mathbf{x}_1$.
    Then, we have that:
    \begin{equation*}
        \begin{split}
            \sum_{i=1}^M \text{MLP}(\mathbf{x}_i) &= \text{MLP}(\mathbf{x}_1) + \sum_{i=2}^M \text{MLP}(a_i \mathbf{x}_1) \\
            &= \text{MLP}(\mathbf{x}_1) + \sum_{i=2}^M a_i \text{MLP}(\mathbf{x}_1) \\
            &= (1+a_2+\ldots+a_M) \text{MLP}(\mathbf{x}_1) \\
            &= \text{MLP}\Big((1+a_2+\ldots+a_M) \mathbf{x}_1 \Big) \\
            &= \text{MLP} \bigg( \sum_{i=1}^M \mathbf{x}_i \bigg)
        \end{split}
    \end{equation*}
\end{proof}
It is trivial to generalize the above Lemma to MLPs that contain more than two layers.
We also prove the following Lemma.
\begin{lemma}
    Let the MLPs of the GIN-$0$ model be instances of the MLP of Equation~\eqref{eq:mlp} above.
    Let $\mathcal{V}$ denote the set of nodes of all graphs and let $\mathcal{X}^{(k-1)} = \oms \mathbf{h}_1^{(k-1)}, \ldots, \mathbf{h}_{|\mathcal{V}|}^{(k-1)} \cms$ be the multiset of node representations that emerged at the $(k-1)$-th layer of the model.
    Suppose that given any two vectors from this multiset, one is a scalar multiple of the other, \ie if $\mathbf{h}_i^{(k-1)}, \mathbf{h}_j^{(k-1)} \in \mathcal{X}^{(k-1)}$, then $\mathbf{h}_i^{(k-1)} = a \mathbf{h}_j^{(k-1)}$ where $a>0$.
    Then, the same holds for the node representations that emerge at the $k$-th layer of the model, \ie for any two vectors $\mathbf{h}_i^{(k)}, \mathbf{h}_j^{(k)} \in \mathcal{X}^{(k)} = \oms \mathbf{h}_1^{(k)}, \ldots, \mathbf{h}_{|\mathcal{V}|}^{(k)} \cms$, we have that $\mathbf{h}_i^{(k)} = a \mathbf{h}_j^{(k)}$ where $a>0$.
    \label{lemma:4}
\end{lemma}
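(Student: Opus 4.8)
The plan is to obtain the claim as an almost immediate consequence of the positive homogeneity of the $\text{MLP}$ that was isolated inside the proof of Lemma~\ref{lemma:1}, namely $\text{MLP}(a\mathbf{x}) = a\,\text{MLP}(\mathbf{x})$ for every scalar $a > 0$.

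First I would fix a reference vector $\mathbf{z} := \mathbf{h}_1^{(k-1)}$ from the multiset $\mathcal{X}^{(k-1)}$. By hypothesis, every representation in $\mathcal{X}^{(k-1)}$ is a positive scalar multiple of any other, so in particular there exist scalars $a_u > 0$ (with $a_1 = 1$) such that $\mathbf{h}_u^{(k-1)} = a_u \mathbf{z}$ for all $u \in \mathcal{V}$. Aggregating over the closed neighbourhood of an arbitrary node $v$ then gives
\[
\sum_{u \in \mathcal{N}(v) \cup \{v\}} \mathbf{h}_u^{(k-1)} = \Bigg( \sum_{u \in \mathcal{N}(v) \cup \{v\}} a_u \Bigg) \mathbf{z} =: c_v\, \mathbf{z},
\]
where $c_v > 0$ since it is a sum of positive numbers over a set that contains at least $v$ itself.

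Next I would plug this into the GIN-$0$ update of Equation~\eqref{eq:gin0} and use positive homogeneity:
\[
\mathbf{h}_v^{(k)} = \text{MLP}^{(k)}\big( c_v\, \mathbf{z} \big) = c_v\, \text{MLP}^{(k)}(\mathbf{z}).
\]
Setting $\mathbf{w} := \text{MLP}^{(k)}(\mathbf{z})$, this shows that $\mathbf{h}_v^{(k)} = c_v\, \mathbf{w}$ with $c_v > 0$ for every node $v$, that is, all layer-$k$ representations lie along the common direction $\mathbf{w}$. Hence for any two nodes $i,j$ we obtain $\mathbf{h}_i^{(k)} = (c_i/c_j)\, \mathbf{h}_j^{(k)}$ with $c_i/c_j > 0$, which is exactly the assertion; the degenerate case $\mathbf{w} = \mathbf{0}$ (equivalently $\mathbf{z} = \mathbf{0}$, or $\text{MLP}^{(k)}$ annihilating $\mathbf{z}$) is also consistent, since then all layer-$k$ representations equal $\mathbf{0}$ and any positive scalar relates them.

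I do not expect a substantive obstacle here: the only points needing care are (i) checking that a sum of positive multiples of the fixed vector $\mathbf{z}$ is again a positive multiple of $\mathbf{z}$, and (ii) verifying that the coefficient $c_v$ never vanishes, which holds because $v \in \mathcal{N}(v) \cup \{v\}$. Extending the argument from two-layer MLPs to deeper ones is immediate, exactly as remarked after Lemma~\ref{lemma:1}.
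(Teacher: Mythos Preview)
Your proposal is correct and follows essentially the same approach as the paper: fix a reference vector, write every layer-$(k-1)$ representation as a positive multiple of it, aggregate to get a positive scalar times the reference, and then invoke the positive homogeneity $\text{MLP}(a\mathbf{x})=a\,\text{MLP}(\mathbf{x})$ to conclude. If anything, your write-up is slightly more careful than the paper's, since you explicitly note why $c_v>0$ and address the degenerate zero case.
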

\begin{proof}
    For $a > 0$, we have that $\mathbf{W} (a \mathbf{x}) = a \mathbf{W} \mathbf{x}$.
    Furthermore, we also have that $\text{ReLU}(a \mathbf{x}) = a \text{ReLU}(\mathbf{x})$.
    Then, we have:
    \begin{equation*}
       \text{MLP}(a \mathbf{x}) = \text{ReLU} \Bigg( \mathbf{W}_2 \, \text{ReLU} \bigg( \mathbf{W}_1 (a \mathbf{x}) \bigg) \Bigg) = a \text{ReLU} \Bigg( \mathbf{W}_2 \, \text{ReLU} \bigg( \mathbf{W}_1 (\mathbf{x}) \bigg) \Bigg) = a \text{MLP}( \mathbf{x})
    \end{equation*}
    We have assumed that given an arbitrary node $w \in \mathcal{V}$, then for any node $u \in \mathcal{V}$, we have that $\mathbf{h}_u^{(k-1)} = a_u \mathbf{h}_w^{(k-1)}$.
    Then, given any node $v \in \mathcal{V}$, its representation is updated as follows:
    \begin{equation*}
        \begin{split}
            \mathbf{h}_v^{(k)} = \text{MLP}^{(k)} \Bigg( \sum_{u \in \mathcal{N}(v) \cup \{v\}} \mathbf{h}_u^{(k-1)} \Bigg) &= \text{MLP}^{(k)} \Bigg( \sum_{u \in \mathcal{N}(v) \cup \{v\}} a_u \mathbf{h}_w^{(k-1)} \Bigg) \\
            &= \sum_{u \in \mathcal{N}(v) \cup \{v\}} a_u \text{MLP}^{(k)} \Bigg( \mathbf{h}_w^{(k-1)} \Bigg) \\
            &= c \text{MLP}^{(k)} \Bigg( \mathbf{h}_w^{(k-1)} \Bigg)
        \end{split}
    \end{equation*}
    where $c =\sum_{u \in \mathcal{N}(v) \cup \{v\}} a_u  > 0$.
    Since node $v$ is an arbitrary node, we can conclude that all nodes obtain representations that are scalar multiples of each other and they point in the same direction.
\end{proof}
Thus, the above Lemma suggests that for MLPs of the form of Equation~\eqref{eq:mlp}, the node representations produced by GIN-$0$ are either scalar multiples of each other and they point in the same direction or are all equal to the all-zeros vector.

Let $L_f^{(k)}$ denote the Lipschitz constant associated with the MLP of the $k$-th neighborhood aggregation layer of GIN-$0$.
We assume that all the nodes of the graph are initially annotated with a single feature equal to $1$.
Then, we have:
\begin{adjustwidth}{-50pt}{0pt}
\begin{align*}
    || \mathbf{h}_v^{(1)} - \mathbf{h}_u^{(1)} ||_2 &= \Bigg|\Bigg| \textsc{MLP}^{(1)} \Bigg( \sum_{w \in \mathcal{N}^+(v)} \mathbf{h}_w^{(0)} \Bigg) - \textsc{MLP}^{(1)} \Bigg( \sum_{w \in \mathcal{N}^+(u)} \mathbf{h}_w^{(0)} \Bigg) \Bigg|\Bigg|_2 \\
    &\leq L_f^{(1)} \Bigg|\Bigg| \sum_{w \in \mathcal{N}^+(v)} 1 \quad - \sum_{w \in \mathcal{N}^+(u)} 1 \Bigg|\Bigg|_2 \\
    &= L_f^{(1)} \Big|\Big| d(v) - d(u) \Big|\Big|_2 \\[5ex]
    || \mathbf{h}_v^{(2)} - \mathbf{h}_u^{(2)} ||_2 &= \Bigg|\Bigg| \textsc{MLP}^{(2)} \Bigg( \sum_{w \in \mathcal{N}^+(v)} \mathbf{h}_w^{(1)} \Bigg) - \textsc{MLP}^{(2)} \Bigg( \sum_{w \in \mathcal{N}^+(u)} \mathbf{h}_w^{(1)} \Bigg) \Bigg|\Bigg|_2 \\
    &\leq L_f^{(2)} \Bigg|\Bigg| \sum_{w \in \mathcal{N}^+(v)} \textsc{MLP}^{(1)} \Bigg( \sum_{x \in \mathcal{N}^+(w)} \mathbf{h}_x^{(0)} \Bigg) \quad - \sum_{w \in \mathcal{N}^+(u)} \textsc{MLP}^{(1)} \Bigg( \sum_{x \in \mathcal{N}^+(w)} \mathbf{h}_x^{(0)} \Bigg) \Bigg|\Bigg|_2 \\
    &= L_f^{(2)} \Bigg|\Bigg| \textsc{MLP}^{(1)} \Bigg( \sum_{w \in \mathcal{N}^+(v)} \sum_{x \in \mathcal{N}^+(w)} \mathbf{h}_x^{(0)} \Bigg) \quad - \textsc{MLP}^{(1)} \Bigg( \sum_{w \in \mathcal{N}^+(u)} \sum_{x \in \mathcal{N}^+(w)} \mathbf{h}_x^{(0)} \Bigg) \Bigg|\Bigg|_2 (\text{due to Lemmas~\ref{lemma:3},~\ref{lemma:4}})\\
    &\leq L_f^{(2)} L_f^{(1)} \Bigg|\Bigg| \sum_{w \in \mathcal{N}^+(v)} \sum_{x \in \mathcal{N}^+(w)} \mathbf{h}_x^{(0)} \quad - \sum_{w \in \mathcal{N}^+(u)} \sum_{x \in \mathcal{N}^+(w)} \mathbf{h}_x^{(0)} \Bigg|\Bigg|_2 \\
    &= L_f^{(2)} L_f^{(1)} \Bigg|\Bigg| \sum_{w \in \mathcal{N}^+(v)} \sum_{x \in \mathcal{N}^+(w)} 1 \quad - \sum_{w \in \mathcal{N}^+(u)} \sum_{x \in \mathcal{N}^+(w)} 1 \Bigg|\Bigg|_2 \\
    \vdots\\
    || \mathbf{h}_v^{(K)} - \mathbf{h}_u^{(K)} ||_2 &= \Bigg|\Bigg| \textsc{MLP}^{(K)} \Bigg( \sum_{w \in \mathcal{N}^+(v)} \mathbf{h}_w^{(K-1)} \Bigg) - \textsc{MLP}^{(K)} \Bigg( \sum_{w \in \mathcal{N}^+(u)} \mathbf{h}_w^{(K-1)} \Bigg) \Bigg|\Bigg|_2 \\
    &\leq L_f^{(K)} \Bigg|\Bigg| \sum_{w \in \mathcal{N}^+(v)} \textsc{MLP}^{(K-1)} \Bigg( \sum_{x \in \mathcal{N}^+(w)} \mathbf{h}_x^{(K-2)} \Bigg) \quad - \sum_{w \in \mathcal{N}^+(u)} \textsc{MLP}^{(K-1)} \Bigg( \sum_{x \in \mathcal{N}^+(w)} \mathbf{h}_x^{(K-2)} \Bigg) \Bigg|\Bigg|_2 \\
    &= L_f^{(K)} \Bigg|\Bigg| \textsc{MLP}^{(K-1)} \Bigg( \sum_{w \in \mathcal{N}^+(v)} \sum_{x \in \mathcal{N}^+(w)} \mathbf{h}_x^{(K-2)} \Bigg) - \textsc{MLP}^{(K-1)} \Bigg( \sum_{w \in \mathcal{N}^+(u)} \sum_{x \in \mathcal{N}^+(w)} \mathbf{h}_x^{(K-2)} \Bigg) \Bigg|\Bigg|_2 (\text{due to Lemmas~\ref{lemma:3},~\ref{lemma:4}})\\
    &\leq L_f^{(K)} L_f^{(K-1)} \ldots \, L_f^{(2)} \Bigg|\Bigg| \sum_{w \in \mathcal{N}^+(v)} \sum_{x \in \mathcal{N}^+(w)} \ldots \quad \textsc{MLP}^{(1)} \Bigg( \sum_{z \in \mathcal{N}^+(y)} \mathbf{h}_z^{(0)} \Bigg) \quad \\
    &\qquad \qquad \qquad \qquad \qquad \qquad \qquad \qquad - \sum_{w \in \mathcal{N}(u) \cup \{u\}} \sum_{x \in \mathcal{N}^+(w)} \ldots \quad \textsc{MLP}^{(1)} \Bigg( \sum_{z \in \mathcal{N}^+(y)} \mathbf{h}_z^{(0)} \Bigg) \Bigg|\Bigg|_2 \\
    &= L_f^{(K)} L_f^{(K-1)} \ldots \, L_f^{(2)} \Bigg|\Bigg| \textsc{MLP}^{(1)} \Bigg( \sum_{w \in \mathcal{N}^+(v)} \sum_{x \in \mathcal{N}^+(w)} \ldots \sum_{z \in \mathcal{N}^+(y)} \mathbf{h}_z^{(0)} \Bigg) \quad \\
    &\qquad \qquad \qquad \qquad \qquad \qquad \qquad \qquad - \textsc{MLP}^{(1)} \Bigg( \sum_{w \in \mathcal{N}(u) \cup \{u\}} \sum_{x \in \mathcal{N}^+(w)} \ldots \sum_{z \in \mathcal{N}^+(y)} \mathbf{h}_z^{(0)} \Bigg) \Bigg|\Bigg|_2 (\text{due to Lemmas~\ref{lemma:3},~\ref{lemma:4}})\\
    &\leq L_f^{(K)} L_f^{(K-1)} \ldots \, L_f^{(1)} \Bigg|\Bigg| \sum_{w \in \mathcal{N}^+(v)} \sum_{x \in \mathcal{N}^+(w)} \ldots \sum_{z \in \mathcal{N}^+(y)} \mathbf{h}_z^{(0)} \quad \\
    &\qquad \qquad \qquad \qquad \qquad \qquad \qquad \qquad - \sum_{w \in \mathcal{N}^+(u)} \sum_{x \in \mathcal{N}^+(w)} \ldots \sum_{z \in \mathcal{N}^+(y)} \mathbf{h}_z^{(0)} \Bigg|\Bigg|_2 \\
    &= L_f^{(K)} L_f^{(K-1)} \ldots \, L_f^{(1)} \Bigg|\Bigg| \sum_{w \in \mathcal{N}^+(v)} \sum_{x \in \mathcal{N}^+(w)} \ldots \sum_{z \in \mathcal{N}^+(y)} 1 \quad - \sum_{w \in \mathcal{N}^+(u)} \sum_{x \in \mathcal{N}^+(w)} \ldots \sum_{z \in \mathcal{N}^+(y)} 1 \Bigg|\Bigg|_2 \\
\end{align*}
\end{adjustwidth}

We observe that for $k=1$, $d(v)$ and $d(u)$ are equal to the number of walks of length $1$ starting from nodes $v$ and $u$, respectively.
Likewise, for $k=2$, $\sum_{w \in \mathcal{N}(v) \cup \{v\}} \sum_{x \in \mathcal{N}(w) \cup \{w\}} 1$ and $\sum_{w \in \mathcal{N}(u) \cup \{u\}} \sum_{x \in \mathcal{N}(w) \cup \{w\}} 1$ are equal to the number of walks of length $2$ starting from nodes $v$ and $u$, respectively.
And more generally, for $k=K$, $\sum_{w \in \mathcal{N}(v) \cup \{v\}} \sum_{x \in \mathcal{N}(w) \cup \{w\}} \ldots \sum_{z \in \mathcal{N}(y) \cup \{y\}} 1$ and $\sum_{w \in \mathcal{N}(u) \cup \{u\}} \sum_{x \in \mathcal{N}(w) \cup \{w\}} \ldots \sum_{z \in \mathcal{N}(y) \cup \{y\}} 1$ are equal to the number of walks of length $K$ starting from nodes $v$ and $u$, respectively.
Thus, the representations learned by GIN-$0$ are related to the number of walks emanating from each node.

\section{Experimental Setup}\label{sec:experimental_setup}
In all the experiments performed on the ENZYMES, IMDB-BINARY and IMDB-MULTI, Cora and Citeseer datasets, the different GNN models were trained in the original learning tasks (graph classification for ENZYMES, IMDB-BINARY and IMDB-MULTI and node classification for Cora and Citeseer).
For datasets where nodes are annotated with some initial features (\eg ENZYMES, Cora, Citeseer), those features were not taken into account.
Each dataset is randomly split into training, validation, and test sets with an $80:10:10$
split ratio, respectively.
All models consist of a series of neighborhood aggregation layers.
For node-level tasks, the final neighborhood aggregation layer is followed by a $2$-layers MLP which produces class probabilities.
For graph-level tasks, the final neighborhood aggregation layer is followed by a readout function which computes the sum of the representations of the nodes.
The output of the readout function is passed on to a $2$-layers MLP which produces class probabilities.
For all experiments, the batch size is set equal to $64$.
Each model is trained for $300$ epochs by minimizing the cross-entropy loss.
We use the Adam optimizer for model training.
We store in the disk the parameters of the model that achieved the lowest validation loss and those parameters are retrieved once training has finished.
Then, $100$ nodes from the graphs that belong to the test set are randomly sampled and the representations of those nodes are extracted from the final neighborhood aggregation layer.
Then, pairwise distances of those nodes are computed and compared against the corresponding distances that emerge from the (normalized) walks.

\section{Additional Visualizations}\label{sec:further_results}
We provide further experimental results in Figures~\ref{fig:corr_gnns_imbd_multi},~\ref{fig:corr_gnns_cora} and~\ref{fig:corr_gnns_citeseer}.
The three Figures compare the Euclidean distance of the representations of the nodes that emerge at the third layer of GIN-$0$ (resp. GCN) against the Euclidean distance of the number of walks (resp. sum of normalized walks) of length $3$ starting from those nodes on the IMDB-MULTI, Cora and Citeseer datasets, respectively.
\begin{figure}[h]
    \centering
    \begin{subfigure}{0.32\textwidth}
        \includegraphics[width=\textwidth]{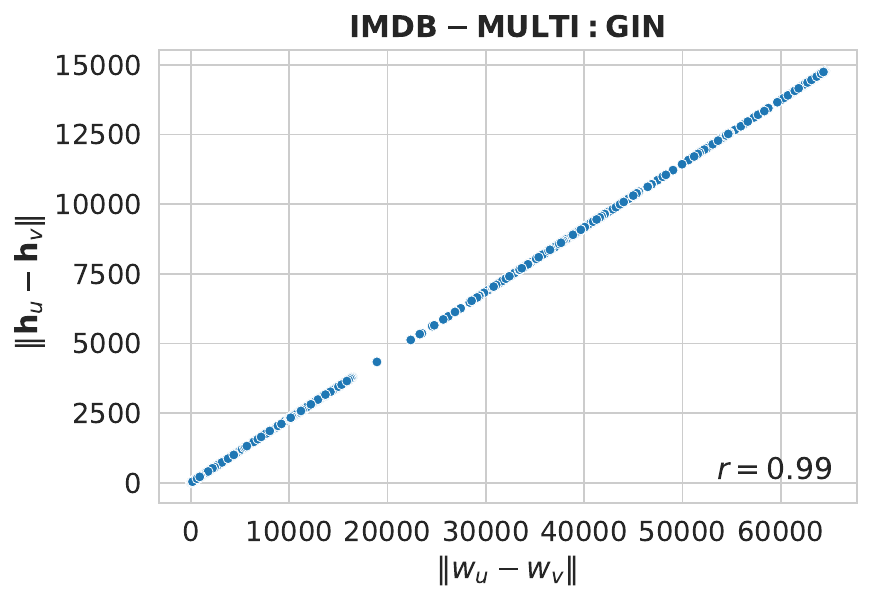}
    \end{subfigure}
    \begin{subfigure}{0.32\textwidth}
        \includegraphics[width=\textwidth]{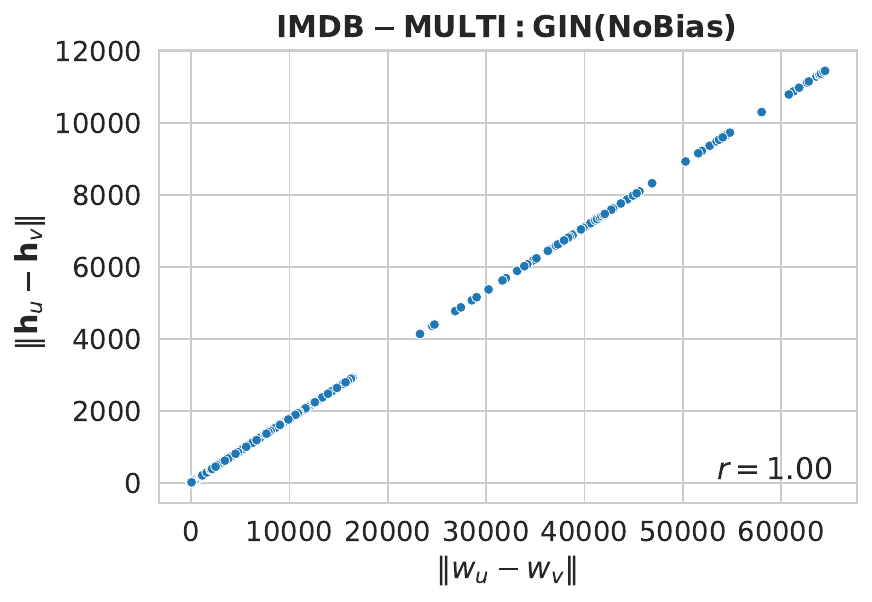}
    \end{subfigure}
    \begin{subfigure}{0.32\textwidth}
        \includegraphics[width=\textwidth]{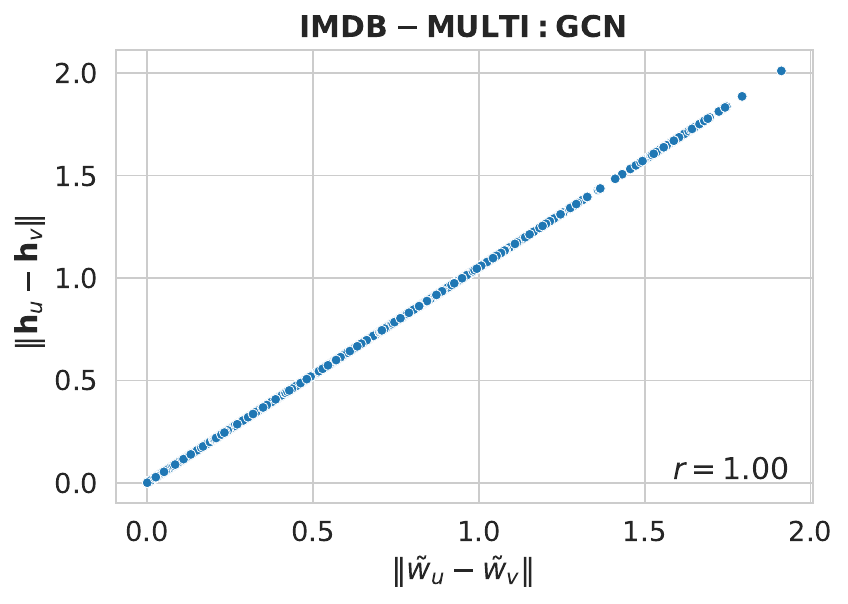}
    \end{subfigure}
    \caption{Euclidean distances of the representations generated at the third layer of the different models vs. Euclidean distances of the number of walks (or normalized walks) of length 3 starting from the different nodes on the IMDB-MULTI dataset.
    Nodes are annotated with a single feature equal to $1$.}
    \label{fig:corr_gnns_imbd_multi}
\end{figure}

\begin{figure}[h]
    \centering
    \begin{subfigure}{0.32\textwidth}
        \includegraphics[width=\textwidth]{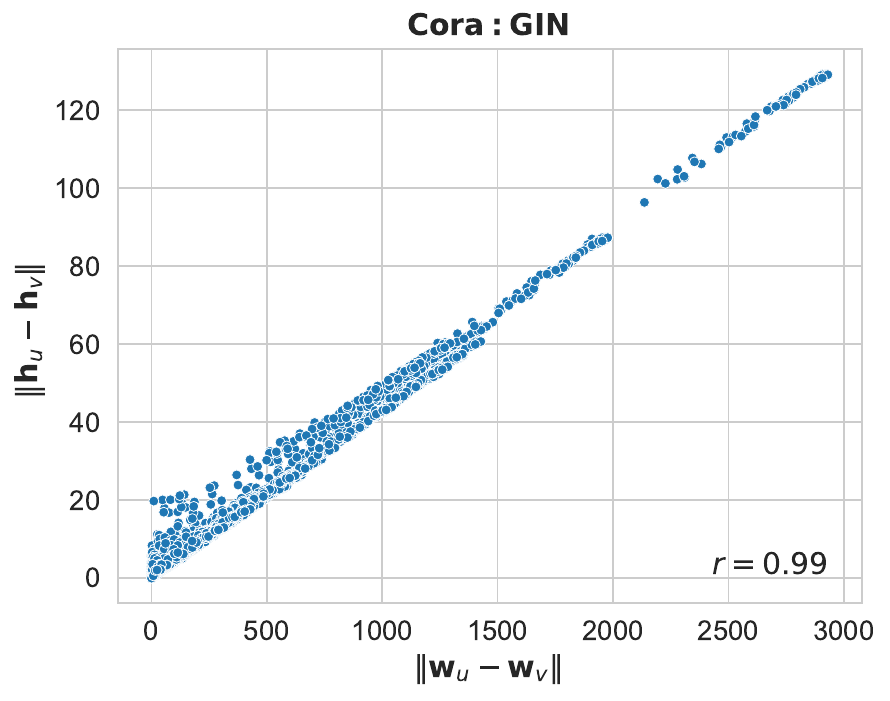}
    \end{subfigure}
    \begin{subfigure}{0.32\textwidth}
        \includegraphics[width=\textwidth]{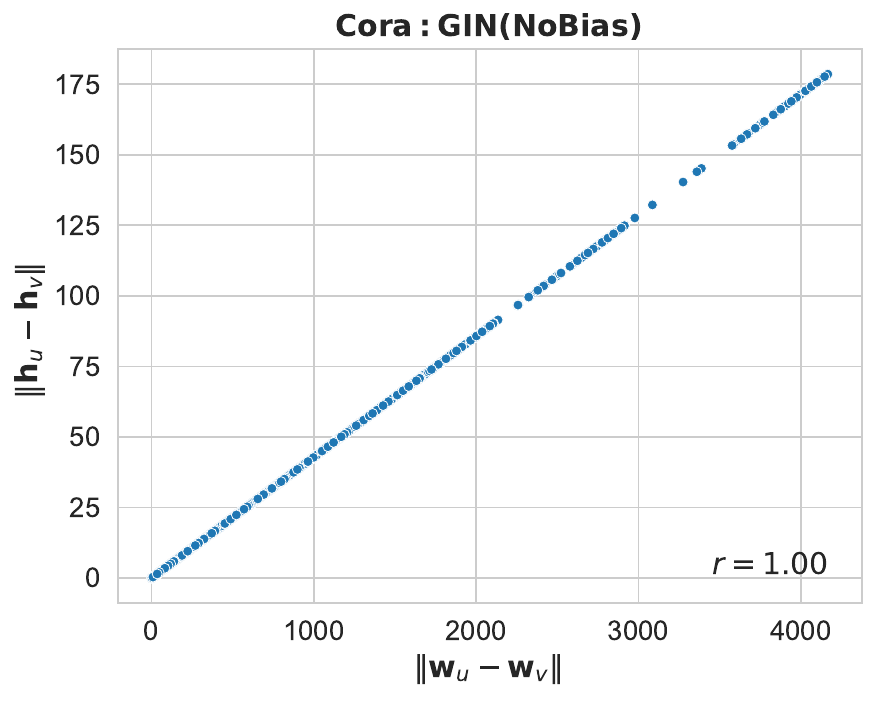}
    \end{subfigure}
    \begin{subfigure}{0.32\textwidth}
        \includegraphics[width=\textwidth]{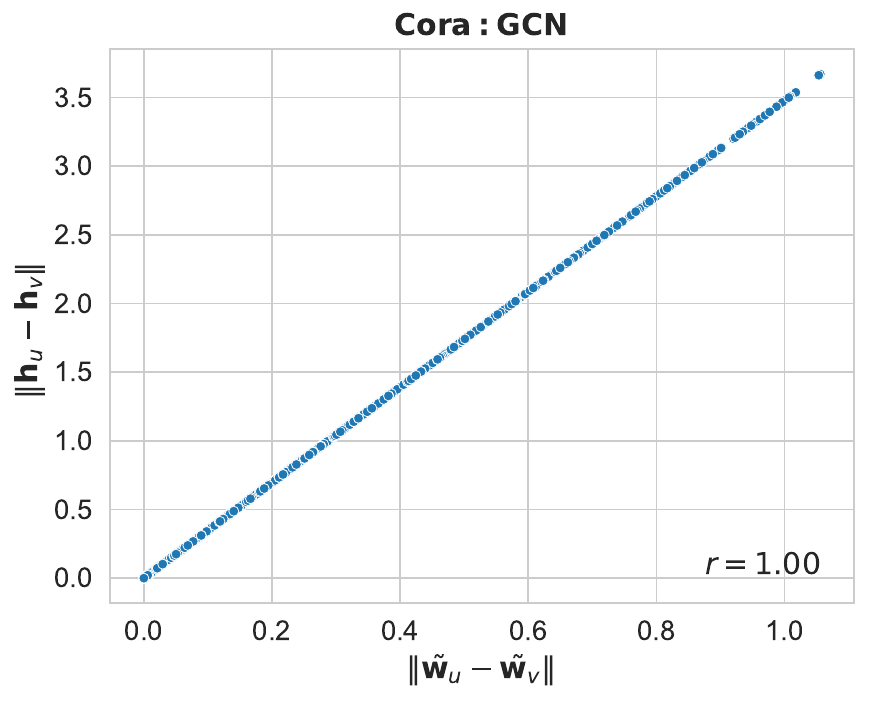}
    \end{subfigure}
    \caption{Euclidean distances of the representations generated at the third layer of the different models vs. Euclidean distances of the number of walks (or normalized walks) of length 3 starting from the different nodes on the Cora dataset.
    Nodes are annotated with a single feature equal to $1$.}
    \label{fig:corr_gnns_cora}
\end{figure}

\begin{figure}[h]
    \centering
    \begin{subfigure}{0.32\textwidth}
        \includegraphics[width=\textwidth]{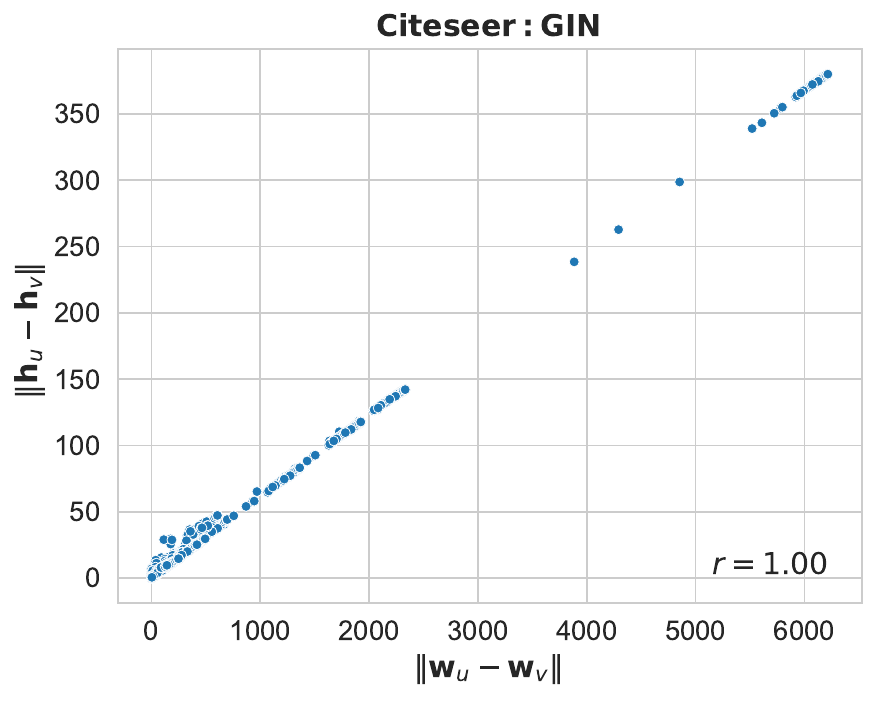}
    \end{subfigure}
    \begin{subfigure}{0.32\textwidth}
        \includegraphics[width=\textwidth]{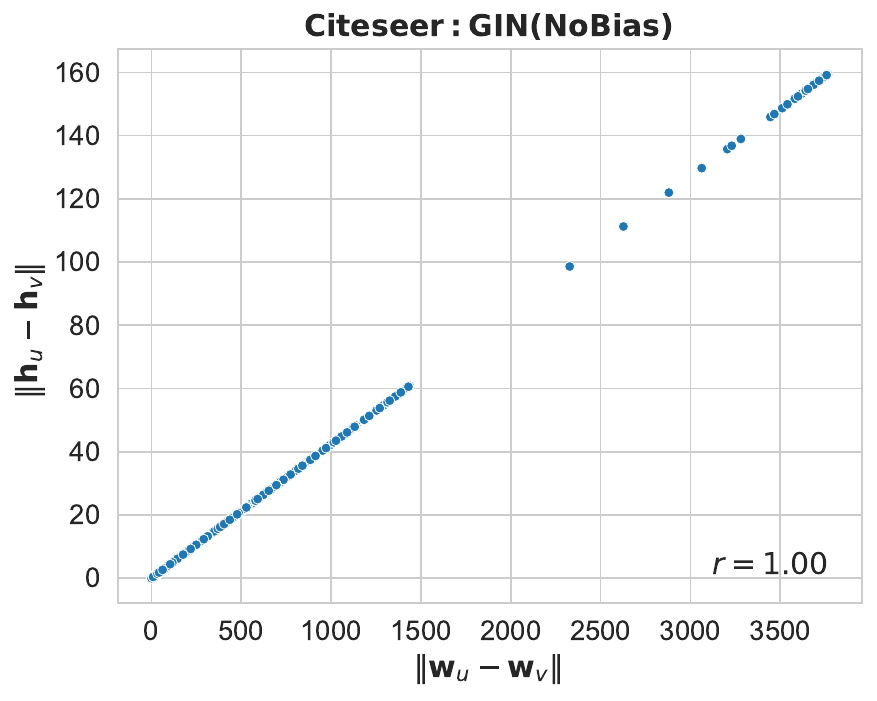}
    \end{subfigure}
    \begin{subfigure}{0.32\textwidth}
        \includegraphics[width=\textwidth]{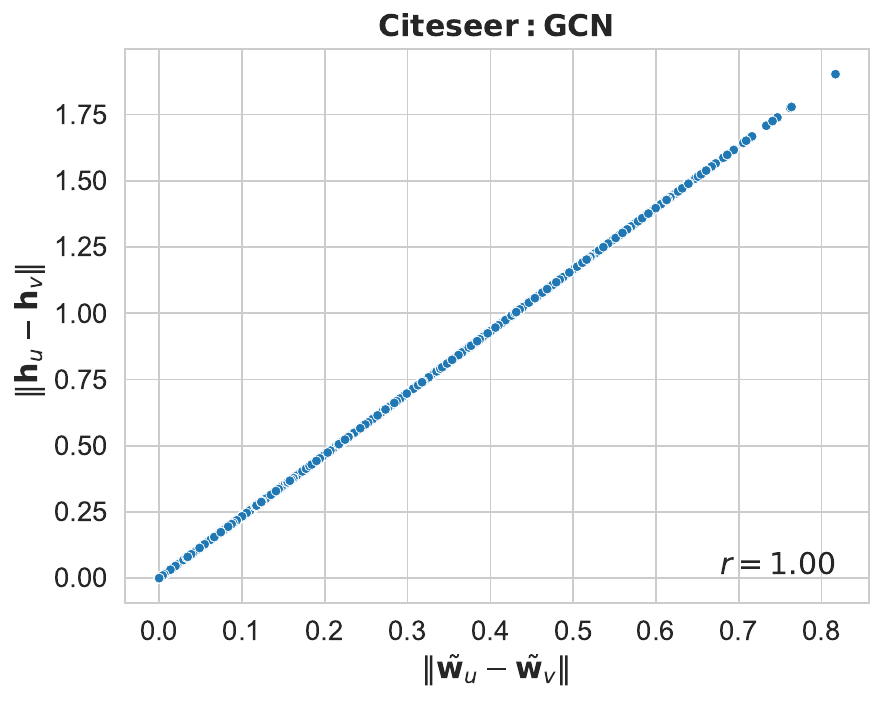}
    \end{subfigure}
    \caption{Euclidean distances of the representations generated at the third layer of the different models vs. Euclidean distances of the number of walks (or normalized walks) of length 3 starting from the different nodes on the Citeseer dataset.
    Nodes are annotated with a single feature equal to $1$.}
    \label{fig:corr_gnns_citeseer}
\end{figure}

Figure~\ref{fig:corr_gnns_edge} illustrates how the Euclidean distance between the representations of two nodes at the third layer of GIN-$0$ and GCN varies as a function of the descrease in the number of walks due to the removal of edges from the node's local neighborhood.
Each perturbed graph emerges from the original graph by just removing one edge which connects two nodes whose shortest path distance from $v$ is at most $2$.
We denote by $u$ the node of the perturbed graph that corresponds to node $v$ of the original graph.
\begin{figure*}[h]
    \begin{subfigure}{0.32\textwidth}
    \includegraphics[width=\textwidth]{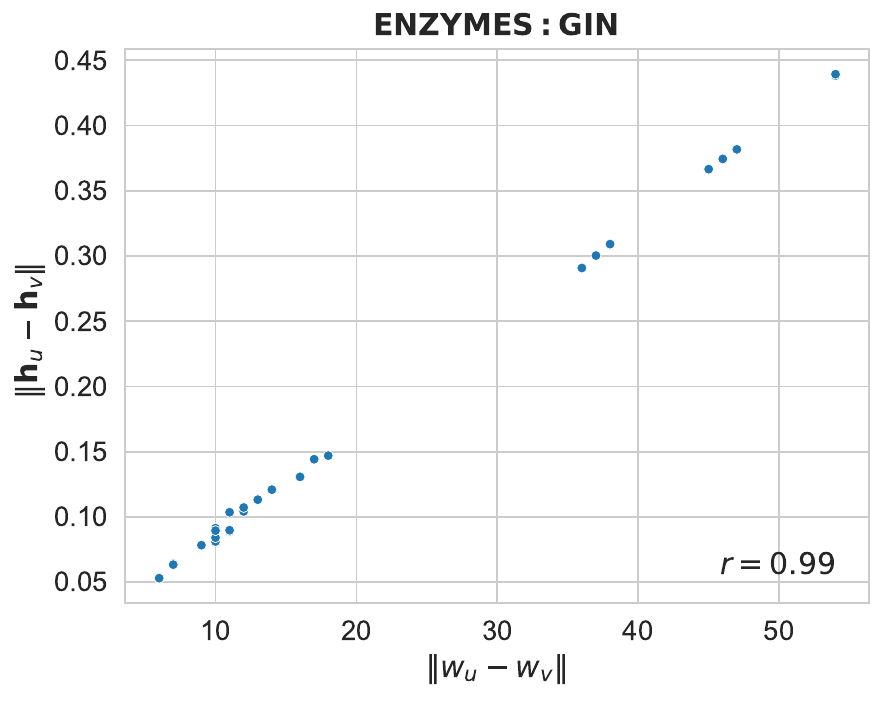}
    \end{subfigure}
    \begin{subfigure}{0.32\textwidth}
    \includegraphics[width=\textwidth]{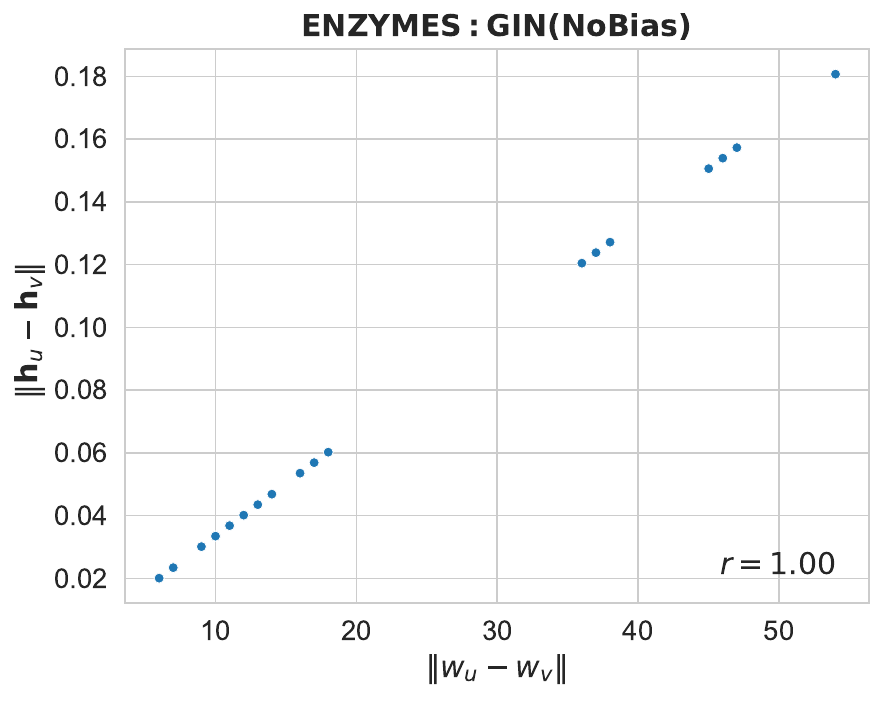}
    \end{subfigure}
    \begin{subfigure}{0.32\textwidth}
    \includegraphics[width=\textwidth]{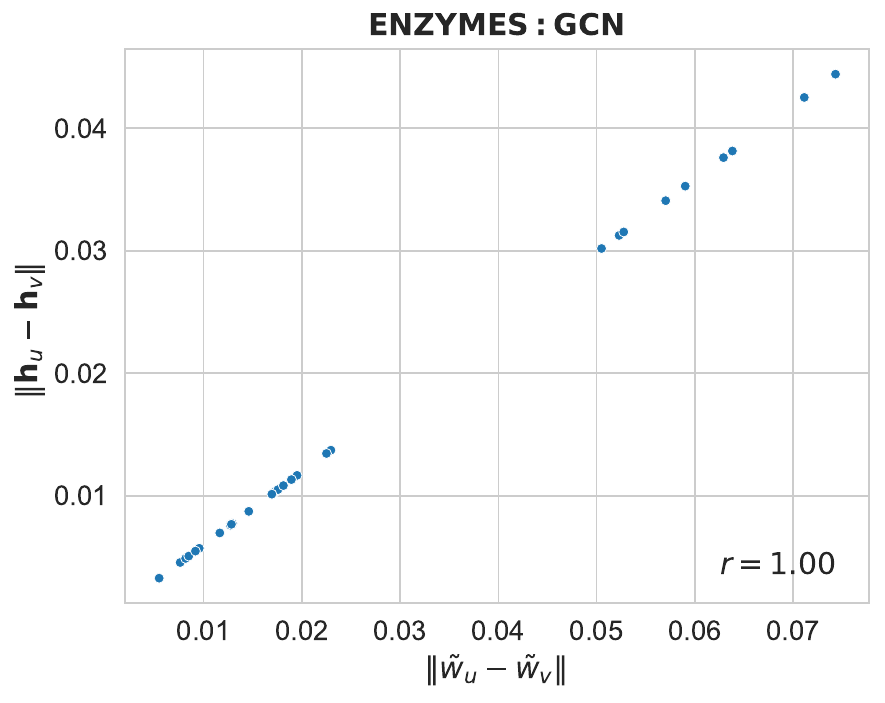}
    \end{subfigure}
    \caption{Euclidean distances of the representations generated at the third layer of the different models vs. Euclidean distances of the number of walks (or sum of normalized walks) of length $3$ starting from the different nodes.
    Nodes $v$ and $u$ correspond to the same node in the original and the perturbed graph, respectively.
    Each perturbed graph has emerged by removing one edge from node $v$'s neighborhood.}
    \label{fig:corr_gnns_edge}
\end{figure*}

\end{document}